\documentclass{article}

\usepackage{microtype}
\usepackage{graphicx}
\usepackage{subcaption}
\usepackage{algorithm}
\usepackage{wrapfig}
\usepackage{booktabs} 
\usepackage{multirow}
\usepackage{makecell}
\usepackage{hyperref}
\usepackage{url}
\usepackage{xurl} 
\usepackage{amsmath,amssymb}
\usepackage{enumitem}
\usepackage{xspace}
\usepackage[table]{xcolor}
\usepackage[T1]{fontenc}
\usepackage{pgf}

\usepackage{tcolorbox}
\definecolor{PromptNavy}{HTML}{244968}
\definecolor{PromptOrange}{HTML}{AD5A2D}
\newtcolorbox{promptcard}[2]{
  colback=#1!4!white, colframe=#1!55!white,
  colbacktitle=#1, coltitle=white,
  title={#2}, fonttitle=\small\bfseries\sffamily,
  fontupper=\small\sffamily, fontlower=\small\sffamily,
  boxrule=0.5pt, arc=2pt,
  left=10pt, right=10pt, top=7pt, bottom=7pt,
  toptitle=5pt, bottomtitle=5pt,
  before skip=10pt, after skip=10pt,
  before upper={\setlength{\parindent}{0pt}\setlength{\parskip}{5pt}},
  before lower={\setlength{\parindent}{0pt}\setlength{\parskip}{5pt}}
}
\newcommand{\promptrole}[2]{%
  {\scriptsize\bfseries\sffamily\color{#1}#2}\par\smallskip}

\usepackage[preprint]{icml2026}

\usepackage{mathtools}
\usepackage{amsthm}

\definecolor{OPDBlue}{HTML}{5C83AA}
\definecolor{OPDRed}{HTML}{B96B69}
\definecolor{groupgray}{HTML}{F2F3F5}
\definecolor{deltaGray}{HTML}{6D737B}
\definecolor{methodBg}{HTML}{E4EEF9}

\newcommand{\OPDBlueMax}{32}
\newcommand{\OPDRedMax}{16}
\newcommand{\OPDGamma}{1.5}

\newcommand{\OPDShade}[4]{%
  \pgfmathsetmacro{\OPDDelta}{(#1)-(#2)}%
  \ifdim\OPDDelta pt>0pt\relax
  \pgfmathsetmacro{\OPDIntensity}{%
  \OPDBlueMax*pow(min(1,max(0,\OPDDelta/max(#3,0.0001))),\OPDGamma)}%
  \edef\OPDPaint{\noexpand\cellcolor{OPDBlue!\OPDIntensity!white}}%
  \OPDPaint
  \else\ifdim\OPDDelta pt<0pt\relax
  \pgfmathsetmacro{\OPDIntensity}{%
  \OPDRedMax*pow(min(1,max(0,-\OPDDelta/max(#4,0.0001))),\OPDGamma)}%
  \edef\OPDPaint{\noexpand\cellcolor{OPDRed!\OPDIntensity!white}}%
  \OPDPaint
  \fi\fi
}

\providecommand{\shpn}[4]{}
\renewcommand{\shpn}[4]{\OPDShade{#1}{#2}{#3}{#4}#1}
\providecommand{\shpbn}[4]{}
\renewcommand{\shpbn}[4]{\OPDShade{#1}{#2}{#3}{#4}\textbf{#1}}
\providecommand{\tv}[1]{\textcolor{deltaGray}{#1}}

\usepackage[capitalize,noabbrev]{cleveref}

\theoremstyle{plain}
\newtheorem{theorem}{Theorem}[section]
\newtheorem{proposition}[theorem]{Proposition}

\theoremstyle{definition}

\theoremstyle{remark}

\usepackage[textsize=tiny]{todonotes}

\newcommand{\Stu}{\ensuremath{\pi_S}\xspace}
\newcommand{\Tea}{\ensuremath{\pi_T}\xspace}
\newcommand{\Synth}{\ensuremath{S_\theta}\xspace}
\newcommand{\method}{\texttt{A2D}\xspace}
\newcommand{\huggingface}{\raisebox{-1.5pt}{\includegraphics[height=1.05em]{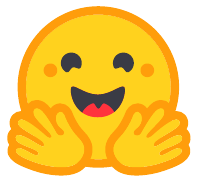}}\xspace}
\newcommand{\github}{\raisebox{-1.5pt}{\includegraphics[height=1.05em]{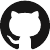}}\xspace}

\newcommand{\Topk}{\ensuremath{\mathrm{Top}\text{-}K}}

\icmltitlerunning{Ask to Distill: Learning to Synthesize Data for On-Policy Distillation}

\begin{document}

\twocolumn[
  \icmltitle{Ask to Distill: Learning to Synthesize Data for On-Policy Distillation}



  \icmlsetsymbol{equal}{*}

  \begin{icmlauthorlist}
    \icmlauthor{Jiashuo Sun}{uiuc}
    \icmlauthor{Hao Zhang}{hkust}
    \icmlauthor{Xueqiang Xu}{uiuc}
    \icmlauthor{Heng Wang}{uiuc}
    \icmlauthor{Yixuan Xie}{uiuc}
    \icmlauthor{Pengcheng Jiang}{uiuc}
    \icmlauthor{Zhenrui Yue}{uiuc}
    \icmlauthor{Jiajun Fan}{uiuc}
    \icmlauthor{Peiran Li}{tamu}
    \icmlauthor{Ge Liu}{uiuc}
    \icmlauthor{Shaowen Wang}{uiuc}
    \icmlauthor{Jiawei Han}{uiuc}
  \end{icmlauthorlist}

  \begin{center}
    \small
    \github\ \href{https://github.com/GasolSun36/Ask_to_Distill}{\texttt{GitHub}}
    \hspace{3em}
    \huggingface\ \href{https://huggingface.co/collections/gasolsun/ask-to-distill-model-and-data-collection}{\texttt{Model and Data Collections}}
    \hspace{3em}
    \huggingface\ \href{https://huggingface.co/datasets/gasolsun/Ask_to_Distill-eval}{\texttt{Eval Data}}
  \end{center}


  \icmlaffiliation{uiuc}{University of Illinois Urbana-Champaign, Urbana, IL, USA}
  \icmlaffiliation{hkust}{The Hong Kong University of Science and Technology, Hong Kong SAR, China}
  \icmlaffiliation{tamu}{Texas A\&M University, College Station, TX, USA}

  \icmlcorrespondingauthor{Jiashuo Sun}{jiashuo5@illinois.edu}

  \icmlkeywords{On-policy distillation, synthetic data, reinforcement learning}

  \vskip 0.3in
]



\printAffiliationsAndNotice{}  

\begin{abstract}
On-policy distillation (OPD) can succeed with surprisingly few training questions, motivating a view of questions as carriers of distributional knowledge transfer. How can we assess whether a newly generated question is valuable for distillation, and learn to generate better questions---without access to its correct answer? We introduce Ask-to-Distill (\method), a framework that treats synthetic questions as distribution elicitors and optimizes the teacher supervision they expose along student-generated trajectories. A student-initialized synthesizer is optimized using a distributional surrogate for question value, computed from a frozen student--teacher pair without reference answers, correctness rewards, or per-candidate student updates. The reward combines teacher--student disagreement with teacher probability mass on the student's candidate support, together with length and diversity controls. The generated questions are filtered and deduplicated into a prompt-only bank for standard OPD. Across mathematical reasoning, code generation, and agentic tasks, our method improves average performance and teacher-gap recovery over unoptimized student-generated prompts and KL-filtered real questions, while matching or exceeding real-bank OPD in the evaluated settings. Signal analyses further show that optimized questions can elicit denser supervision and more total signal from shorter student responses, supporting elicited distributional supervision as an objective for synthetic distillation data. Our code, data and models are publicly available at \url{https://github.com/GasolSun36/Ask_to_Distill}.
\end{abstract}

\begin{figure}[!t]
  \centering
  \includegraphics[width=0.49\textwidth]{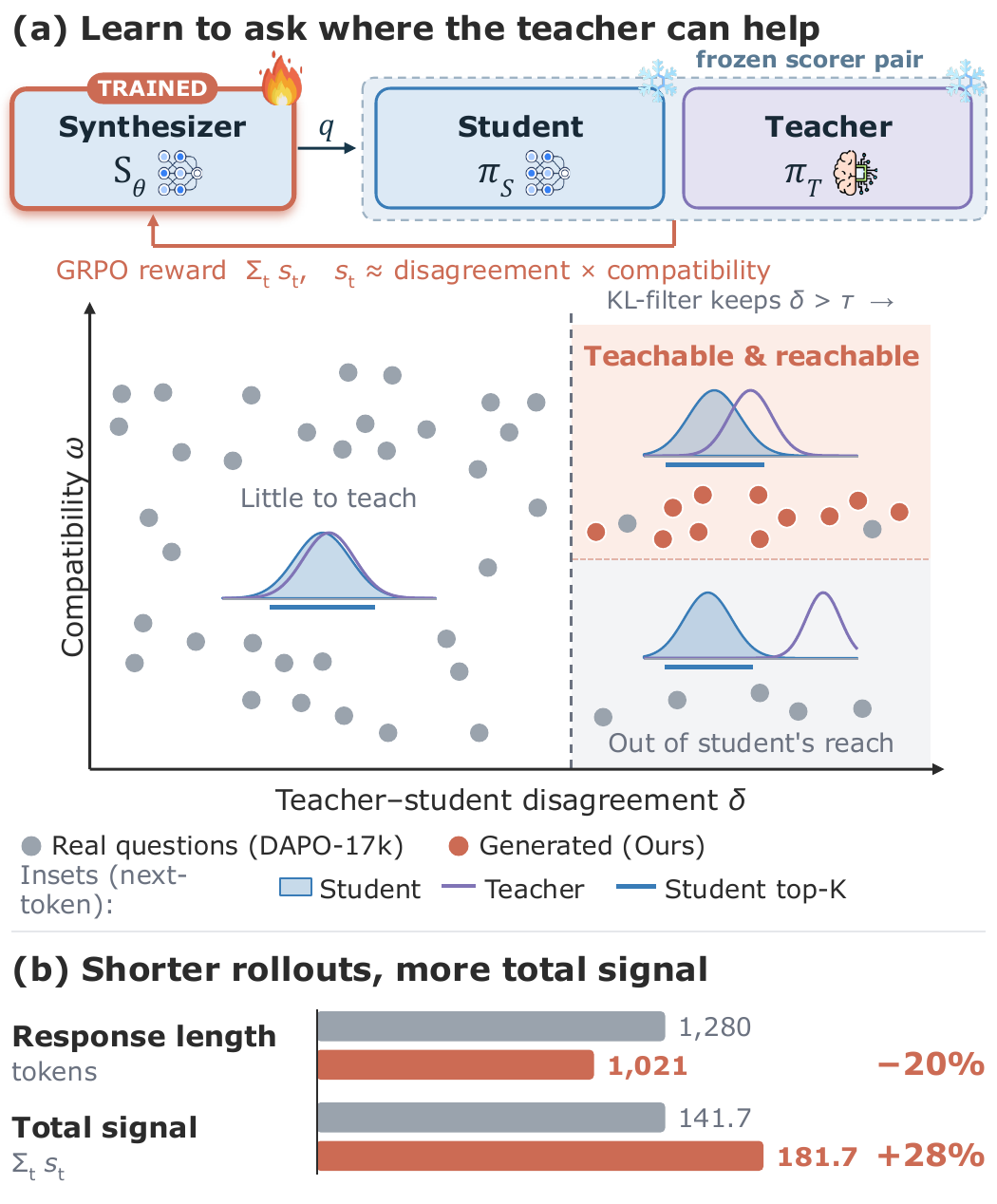}
  \caption{\textbf{\method learns to ask questions where the teacher can help.}
(a) Schematic. A student-initialized synthesizer is RL-trained against a frozen
student--teacher pair to elicit disagreement $\delta$ that remains within the student's
top-$K$ support (compatibility $\omega$). Real questions mostly elicit little
disagreement or disagreement outside the student's reach, which KL filtering cannot
separate. Insets: typical student and teacher next-token distributions per region.
(b) Generated questions yield 20\% shorter responses and 28\% more total signal than
DAPO-17k-Math.}
\end{figure}

\section{Introduction}
\label{sec:introduction}

On-policy distillation (OPD) transfers the behavior of a strong teacher to a student on trajectories sampled by the student itself~\citep{gkd,minillm}. By supervising the states the student actually visits, OPD mitigates the distribution mismatch associated with learning only from fixed teacher demonstrations and provides dense token-level guidance without requiring a task-specific correctness verifier. Although the teacher supplies this supervision, the training questions determine which trajectories the student generates and where teacher feedback becomes available. Constructing effective distillation data therefore requires understanding not only what a question asks, but also what supervision it elicits from a particular student--teacher pair.

\begin{figure*}[!t]
  \centering
  \includegraphics[width=\textwidth]{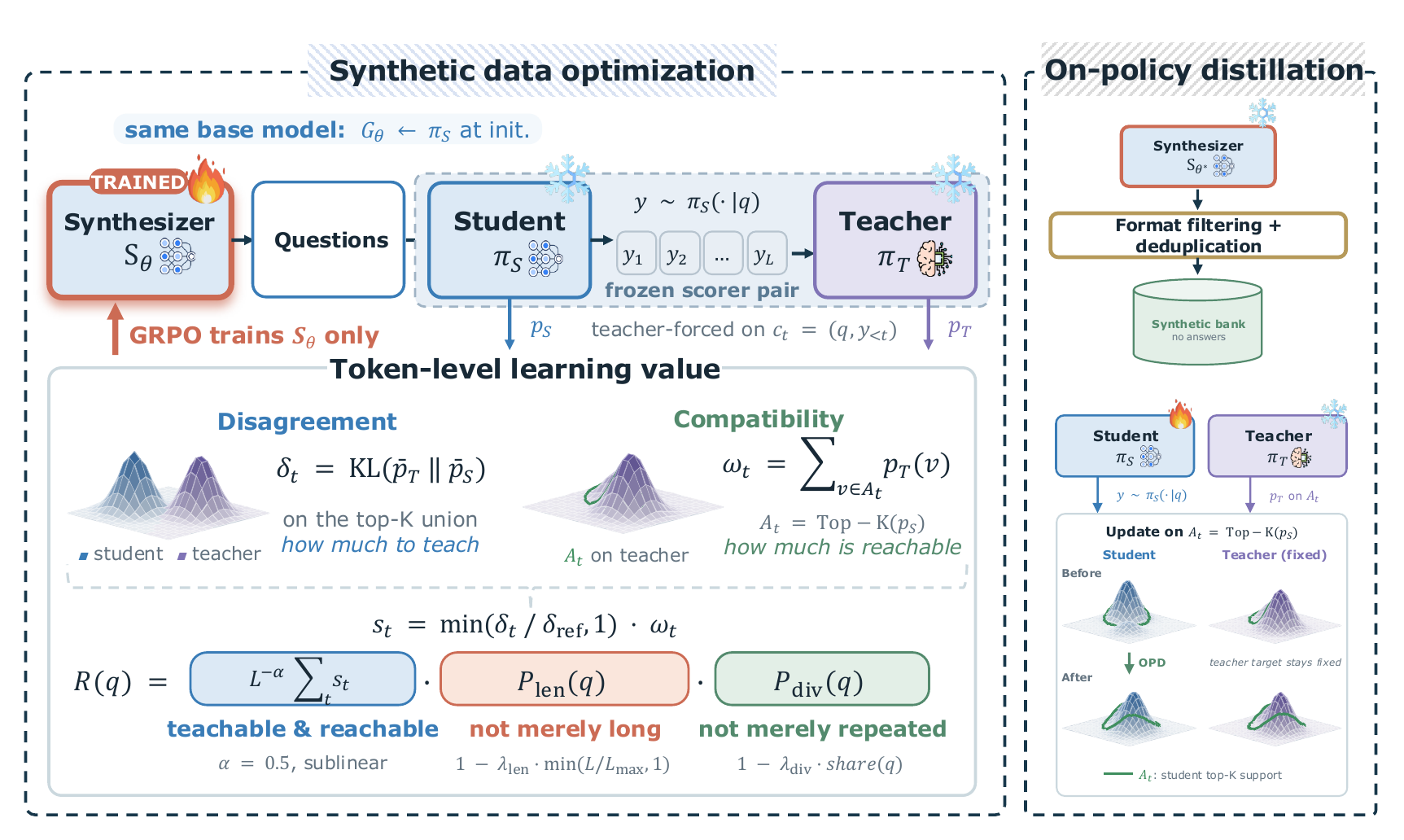}
  \caption{Overview of \method. A student-initialized synthesizer is optimized using learning-value signals from a frozen student--teacher pair. The generated questions are filtered and deduplicated into a prompt-only synthetic bank, which is then used for on-policy distillation with a frozen teacher providing supervision on the student's top-$K$ candidates.}
  \label{fig:ask_to_distill}
\end{figure*}

Recent studies suggest that OPD can be highly prompt-efficient: small subsets can approach full-data performance, and even a single question can recover much of the gain obtained from a large question bank~\citep{rethink_opd2,opd-data-efficiency}. Moreover, self-generated questions can support effective OPD without external seed examples~\citep{data-free}. These findings motivate a different perspective on OPD training data, centered on the supervision that questions elicit. In supervised fine-tuning, synthetic examples provide target responses for imitation~\citep{self-instruct,xu2024wizardlm}; in self-play reinforcement learning, generated tasks provide opportunities to learn from answer-quality or task-success feedback~\citep{huang2025rzero,chen2025selfquestioning}. OPD instead obtains supervision from the teacher's token distributions on student-generated prefixes. A question therefore serves as a \emph{distribution elicitor}: it guides the student into states where teacher--student differences become available for learning. From this perspective, training questions act as carriers of distributional knowledge transfer, rather than directly supplying the target knowledge through reference answers. Their distillation value depends on the usable teacher supervision exposed along the induced trajectories for a particular student--teacher pair, consistent with findings that prompt utility is pair-dependent~\citep{wang2026promptcountdatashapes}. Although OPD supplies teacher supervision without reference answers, optimizing a question synthesizer requires an additional signal: feedback on which questions are worth generating. This raises a central question: \emph{without access to the correct answer to a newly generated question, how can we assess its value for a target student--teacher pair and use that assessment to learn to generate better questions?} We address this question by scoring the distributional supervision exposed along student-generated trajectories, using a frozen student--teacher pair without performing a student update for each candidate.

Turning elicited supervision into an optimization objective requires distinguishing disagreement from useful correction. Token-selection and reweighting methods already exploit the unequal value of supervision across response positions~\citep{tip_opd,filter-opd}. In particular, support-aware analyses of token teachability show that large divergence can reflect either corrective teacher preferences within the student's candidate support or a mismatch concentrated outside that support~\citep{opd_token_teach}. Building on this characterization, we combine teacher--student discrepancy with teacher probability mass on the student's candidate tokens as a surrogate for local learning value. We adapt this token-level principle to question generation by aggregating the signal along student trajectories. This introduces additional challenges: a synthesizer may increase its reward through longer responses or repeatedly produce a narrow family of questions. The question-level objective must therefore balance elicited signal with trajectory length and question diversity.

We introduce Ask-to-Distill (\method), a self-contained framework that optimizes synthetic questions for a target student--teacher pair before performing OPD. We initialize the question-generation policy from the student and train it with reinforcement learning, while keeping the scoring student and teacher frozen. Its reward aggregates support-aware token signals using sublinear length normalization, a soft response-budget penalty, and a batch-diversity term. We then curate the generated questions through format filtering and lexical and semantic deduplication, and perform standard OPD on the resulting prompt-only dataset. Synthesizer training reuses the target student--teacher pair and requires neither external training questions nor auxiliary models. After training, a frozen Qwen3-Embedding-4B model is used only for semantic deduplication of the generated question bank; it does not participate in synthesizer optimization. Our method operates on the prompt-construction axis, which is complementary to approaches that select supervision positions, reweight trajectories, or modify the downstream distillation procedure.

Our contributions are threefold: (i) we formulate synthetic prompt construction as optimizing the distributional supervision elicited from a target student--teacher pair; (ii) we develop a question-level reward that turns support-aware teacher--student signals into feedback for synthesizer optimization, without reference answers, task-specific correctness rewards, or per-candidate student updates, while controlling response length and question diversity; and (iii) we validate the framework across multiple model scales and mathematical reasoning, code generation, and agentic tasks through downstream comparisons, reward-component ablations, and learning-signal analyses. \method exceeds real-bank OPD on the average scores of the evaluated student--teacher pairs. Our analyses further show that optimized questions can elicit denser distributional signals from shorter student responses, supporting the value of optimizing elicited supervision beyond simply increasing rollout length.

\section{Method}
\label{sec:method}

\subsection{Problem formulation: synthetic prompts as distribution elicitors}
\label{sec:prelim}

On-policy distillation (OPD) provides teacher supervision on trajectories sampled by the student. Unlike synthetic examples used for supervised fine-tuning, a question used for OPD need not include a reference response: it supplies the context from which the student generates a trajectory and the teacher provides token-level supervision. Our synthetic data therefore consist of questions only, without ground-truth answers or reference solutions; neither synthesizer reward computation nor downstream OPD requires such annotations. We therefore study synthetic questions as \emph{distribution elicitors}, whose value depends on the supervision they expose for a particular student--teacher pair.

Let \Stu and \Tea denote the student and teacher, with policies $\pi_S$ and $\pi_T$. Given a question $q$, the student samples a response $y \sim \pi_S(\cdot \mid q)$ of length $L$. At each response position $t$, both models are evaluated on the student-generated prefix $c_t = (q, y_{<t})$, producing next-token distributions $\pi_S(\cdot \mid c_t)$ and $\pi_T(\cdot \mid c_t)$. The question thus induces a distribution over the states at which distillation takes place.

Our goal is to construct questions that expose useful teacher supervision along these trajectories. Disagreement alone is insufficient: a question may elicit substantially different predictions while placing little teacher probability mass on the candidates considered by the student's update. We therefore assess questions using both teacher--student discrepancy and compatibility, while retaining basic format and self-containedness requirements during curation.

We formalize the elicited signal through a trajectory-based surrogate:
\begin{equation}
  \mathcal{V}(q;\Stu,\Tea)
  =
  \mathbb{E}_{y\sim\pi_S(\cdot\mid q)}
  \left[
    \frac{1}{L^{\alpha}}
    \sum_{t=1}^{L}s_t
  \right],
  \label{eq:elicitation_value}
\end{equation}
where $s_t$ is a support-aware token score defined in \S\ref{sec:reward}, and $\alpha$ controls the balance between signal density and trajectory length. This quantity measures the distributional signal exposed by a question; it is a surrogate for learning value rather than a direct estimate of downstream performance improvement.

We parameterize the question distribution with a synthesizer $\Synth$, initialized as an independent copy of the initial student. During synthesizer optimization, the scoring student and teacher remain frozen, and only $\theta$ is updated. The synthesizer's reward combines a sampled estimate of Eq.~\eqref{eq:elicitation_value} with response-length and question-diversity controls. The resulting questions are curated and used for downstream OPD, which starts from the original student parameters. Question generation and synthesizer training use only the target student--teacher pair and the student-initialized synthesizer, without external training questions, labeled examples, or auxiliary models. A separate frozen embedding model is used only for semantic deduplication after question generation (\S\ref{sec:bank}).

\subsection{Overview}
\label{sec:overview}

The basic pipeline consists of three stages, each performed once. We also consider repeating the pipeline with the updated student in \S\ref{sec:iterative_training}.

\begin{enumerate}[leftmargin=1.6em,itemsep=2pt,topsep=2pt]
  \item \textbf{Synthetic question optimization} (\S\ref{sec:reward}--\S\ref{sec:synthetic_optimization}). We train $S_\theta$ with group relative policy optimization (GRPO) \cite{grpo}. Each generated question is answered by the frozen student, and the frozen teacher is evaluated on the same student-generated prefixes. Their predictions determine a question-level learning-value reward.
  \item \textbf{Synthetic question curation} (\S\ref{sec:bank}). We sample the trained synthesizer at scale, filter malformed outputs, and remove lexical and semantic duplicates to construct a prompt-only question bank.
  \item \textbf{On-policy distillation} (\S\ref{sec:opd}). We update the original student using teacher supervision on responses sampled from the curated questions, with the same response budget used for question scoring.
\end{enumerate}

The synthesizer and scoring student share their initial parameters but are separate model copies. Only the synthesizer is updated in stage 1; only the student is updated in stage 3. The teacher remains frozen throughout. Algorithm~\ref{alg:ask_to_distill} in the appendix summarizes the full pipeline; hyperparameter values for all stages are reported in Appendix~\ref{sec:implementation}.

\subsection{The learning-value reward}
\label{sec:reward}

\paragraph{Per-position quantities.}
Given a question $q$, we sample a student response $y \sim \pi_S(\cdot\mid q)$ under a response budget $L_{\max}$. Let $L\leq L_{\max}$ be its length and $c_t=(q,y_{<t})$ its prefix at position $t$. We write $p_S(v)=\pi_S(v\mid c_t)$ and $p_T(v)=\pi_T(v\mid c_t)$, suppressing the dependence on $t$ for readability. For a candidate size $K$, we define the student and teacher candidate sets and their union:
\begin{equation}
  \begin{aligned}
    A_t &= \Topk(p_S), &
    B_t &= \Topk(p_T), \\
    U_t &= A_t \cup B_t, &
    |U_t| &\leq 2K.
  \end{aligned}
\end{equation}
For $M\in\{S,T\}$, let
\begin{equation}
  \bar p_M^{U}(v)
  =
  \frac{p_M(v)}
       {\sum_{u\in U_t}p_M(u)},
  \qquad v\in U_t,
\end{equation}
denote the corresponding distribution restricted to and renormalized over $U_t$.

Following support-aware characterizations of token teachability~\citep{opd_token_teach}, we use two quantities to characterize the supervision at each position:
\begin{align}
  \delta_t
  &=
  \mathrm{KL}\!\left(
    \bar p_T^{U}\,\|\,\bar p_S^{U}
  \right)
  =
  \sum_{v\in U_t}
  \bar p_T^{U}(v)
  \log
  \frac{\bar p_T^{U}(v)}
       {\bar p_S^{U}(v)},
  \label{eq:div}\\[2pt]
  \omega_t
  &=
  \sum_{v\in A_t}p_T(v).
  \label{eq:comp}
\end{align}

The \emph{elicited discrepancy} $\delta_t$ measures differences between teacher and student preferences over their combined candidate set. Its teacher-to-student KL direction emphasizes teacher-preferred candidates to which the student assigns relatively low probability. The \emph{compatibility} $\omega_t$ measures the teacher's probability mass on the student's top-$K$ candidates, which are the candidates used by our downstream OPD update. Unlike $\delta_t$, $\omega_t$ uses the original full-vocabulary teacher probabilities without renormalization.

\paragraph{Gating discrepancy by compatibility.}
Large discrepancy can occur even when the teacher places little probability mass on the student's candidate set. Candidates outside this set do not receive explicit candidate-level terms in our downstream update (\S\ref{sec:opd}). We therefore combine discrepancy and compatibility at each position:
\begin{equation}
  s_t
  =
  \min\!\left(
    \frac{\delta_t}
         {\delta_{\mathrm{ref}}+\epsilon},
    1
  \right)
  \omega_t,
  \label{eq:st}
\end{equation}
where $\epsilon$ is a small constant for numerical stability and $\delta_{\mathrm{ref}}$ is a fixed calibration scale. The resulting score lies in $[0,1]$ and favors positions with both substantial discrepancy and teacher probability mass within the student's candidate support. Multiplying the quantities before aggregation requires them to occur at the same position; multiplying their trajectory-level means could instead combine high discrepancy at one position with high compatibility at another. This construction remains a proxy for useful supervision and does not guarantee that every high-scoring position yields a large learning gain.

\paragraph{Calibrating $\delta_{\mathrm{ref}}$.}
We set $\delta_{\mathrm{ref}}$ to the $\kappa$-th percentile of token-level discrepancies measured on $N_{\mathrm{cal}}$ questions sampled from the initial student policy. The frozen student answers these questions under the response budget used for downstream OPD, and the teacher is evaluated on the resulting student-generated prefixes. We pool the resulting $\delta_t$ values to compute the percentile and hold it fixed during synthesizer optimization. This calibration uses no external questions and provides a common discrepancy scale across optimization steps.

\paragraph{Length control.}
We aggregate the token scores using
\begin{equation}
  R_{\mathrm{raw}}(q)
  =
  \frac{1}{L^\alpha}
  \sum_{t=1}^{L}s_t.
  \label{eq:agg}
\end{equation}
Writing $\bar{s}(q)=L^{-1}\sum_{t=1}^{L}s_t$ gives $R_{\mathrm{raw}}(q)=L^{1-\alpha}\bar{s}(q)$. Thus, $\alpha$ controls the balance between average signal and accumulated signal. A per-token mean ($\alpha=1$) gives the same score to trajectories with equal average signal regardless of length, while a sum ($\alpha=0$) grows linearly with length at fixed average signal. We use an intermediate exponent $0<\alpha<1$ to reward sustained signal while reducing the advantage conferred by additional tokens.

This aggregation retains an incentive for longer responses. We moderate it with a multiplicative penalty based on response-budget consumption:
\begin{equation}
  P_{\mathrm{len}}(q)
  =
  1-\lambda_{\mathrm{len}}
  \min\!\left(
    \frac{L}{L_{\max}},1
  \right).
  \label{eq:length_penalty}
\end{equation}
Here, $\lambda_{\mathrm{len}}\in(0,1)$ sets the penalty strength. The penalty decreases with response length and equals $1-\lambda_{\mathrm{len}}$ at the generation cap. Its multiplicative form makes the relative discount independent of the absolute scale of $R_{\mathrm{raw}}$. Together, these controls attenuate length-driven reward growth while retaining a preference for signal accumulated across response positions; they do not eliminate the incentive for longer trajectories.

\paragraph{Diversity control.}
Optimizing the trajectory-level signal alone can concentrate generation on repetitive questions. Following the cluster-based repetition penalty in R-Zero~\citep{huang2025rzero}, we measure within-batch redundancy using BLEU-based text similarity, without extracting hidden-state representations or using an embedding model.

Let $\mathcal{B}=\{q_1,\ldots,q_{N_{\mathcal{B}}}\}$ denote the full batch of questions scored in one synthesizer-training step. We construct a pairwise distance matrix from one minus the sentence-level BLEU similarity between question texts, following the R-Zero implementation, and apply average-linkage agglomerative clustering with distance threshold $\tau$. If $\ell_i$ is the cluster label of $q_i$, its cluster share is
\begin{equation}
  \mathrm{share}(q_i;\mathcal{B})
  =
  \frac{
    \left|\{j:\ell_j=\ell_i\}\right|
  }{N_{\mathcal{B}}}.
\end{equation}
We then define
\begin{equation}
  P_{\mathrm{div}}(q_i;\mathcal{B})
  =
  \max\!\left(
    0,\,
    1-\lambda_{\mathrm{div}}
    \mathrm{share}(q_i;\mathcal{B})
  \right),
\end{equation}
where $\lambda_{\mathrm{div}}>0$ sets the penalty strength.

Questions in larger text-similarity clusters receive a stronger discount. A singleton cluster receives a factor of $1-\lambda_{\mathrm{div}}/N_{\mathcal{B}}$, while for $\lambda_{\mathrm{div}}\geq 1$ a batch assigned entirely to one cluster receives zero reward. The multiplicative form discounts the elicited signal proportionally, avoiding an additive penalty whose relative strength would depend on the signal's numerical scale. Clustering is performed across the full scoring batch, separately from the within-group advantage normalization used by GRPO.

\paragraph{Final question-level reward.}
The reward for a question combines its sampled token signals with the length and diversity factors:
\begin{equation}
  R(q;\mathcal{B})
  =
  \frac{
    P_{\mathrm{len}}(q)
    P_{\mathrm{div}}(q;\mathcal{B})
  }{L^\alpha}
  \sum_{t=1}^{L}s_t.
  \label{eq:reward}
\end{equation}
We make the batch dependence explicit because the diversity factor depends on the other questions scored in the same step. The token-signal term is estimated from one student response per question, providing a single-sample estimate of the expectation in Eq.~\eqref{eq:elicitation_value} before applying the additional controls. Appendix~\ref{app:theory} establishes a compatible-disagreement coverage bound for the normalized reward and relates compatibility to the OPD candidate advantages.

\begin{table*}[!t]
  \centering
  \caption{
    Results across six mathematical reasoning benchmarks for three student--teacher pairs.
    Within each block and metric, shading is relative to Student-Generated OPD
    (white): \textcolor{OPDBlue}{blue} indicates improvement and
    \textcolor{OPDRed}{red} indicates degradation. Positive and negative changes
    are normalised separately across OPD methods and mapped nonlinearly
    (exponent 1.5) to muted palettes.
    Bold indicates the best result among OPD methods, including ties.
    The baselines compare real-bank OPD, untrained student-generated prompts,
    KL-filtered selection, and two additional synthetic-data methods.
  }
  \label{tab:main}

  \footnotesize
  \setlength{\tabcolsep}{6pt}
  \renewcommand{\arraystretch}{1.2}

  \resizebox{\textwidth}{!}{%
    \begin{tabular}{@{}l cccccc @{\hspace{14pt}} c c@{}}
      \toprule
      Method & AIME24 & AIME25 & AIME26 & AMC23 & BRUMO25 & MATH500 & Avg. & Recovery (\%) \\
      \midrule
      \rowcolor{groupgray}
      \multicolumn{9}{@{}l@{}}{\hspace{3pt}\textbf{Qwen3-1.7B (Non-thinking)}\quad{\color{deltaGray}Teacher: Qwen3-4B-Non-Thinking-RL-Math}} \\
      Student
      & 10.0 & 8.3 & 8.3
      & 37.5 & 17.9 & 72.0
      & 25.7 & 0.0 \\
      \tv{Teacher}$^{\dagger}$
      & \tv{41.2} & \tv{33.1} & \tv{34.4}
      & \tv{57.5} & \tv{45.8} & \tv{78.3}
      & \tv{48.4} & \tv{100.0} \\
      \midrule
      \makecell[l]{Student-Generated\\OPD}
      & 27.5 & 22.7 & 21.3
      & 61.6 & 35.0 & 78.4
      & 41.1 & 67.9 \\
      KL-Filtered OPD
      & \shpn{33.5}{27.5}{7.3}{1} & \shpbn{26.7}{22.7}{4.0}{1} & \shpn{23.3}{21.3}{5.2}{1.3}
      & \shpn{64.4}{61.6}{10.3}{1} & \shpn{36.7}{35.0}{1.7}{3.3} & \shpn{80.2}{78.4}{4.0}{1}
      & \shpn{44.1}{41.1}{5.1}{1} & \shpn{81.3}{67.9}{22.3}{1} \\
      Vanilla OPD \cite{rethink_opd1}
      & \shpn{31.5}{27.5}{7.3}{1} & \shpn{23.5}{22.7}{4.0}{1} & \shpn{20.0}{21.3}{5.2}{1.3}
      & \shpn{71.1}{61.6}{10.3}{1} & \shpn{31.7}{35.0}{1.7}{3.3} & \shpn{80.4}{78.4}{4.0}{1}
      & \shpn{43.0}{41.1}{5.1}{1} & \shpn{76.4}{67.9}{22.3}{1} \\
      Evol-Instruct \cite{xu2024wizardlm}
      & \shpn{30.0}{27.5}{7.3}{1} & \shpn{21.2}{22.7}{4.0}{1} & \shpn{21.0}{21.3}{5.2}{1.3}
      & \shpn{63.9}{61.6}{10.3}{1} & \shpbn{37.9}{35.0}{1.7}{3.3} & \shpn{78.2}{78.4}{4.0}{1}
      & \shpn{42.1}{41.1}{5.1}{1} & \shpn{72.0}{67.9}{22.3}{1} \\
      R-Zero \cite{huang2025rzero}
      & \shpn{29.6}{27.5}{7.3}{1} & \shpn{21.7}{22.7}{4.0}{1} & \shpn{20.8}{21.3}{5.2}{1.3}
      & \shpn{65.0}{61.6}{10.3}{1} & \shpn{37.5}{35.0}{1.7}{3.3} & \shpn{79.5}{78.4}{4.0}{1}
      & \shpn{42.3}{41.1}{5.1}{1} & \shpn{73.4}{67.9}{22.3}{1} \\
      \textbf{\method}
      & \shpbn{34.8}{27.5}{7.3}{1} & \shpn{26.0}{22.7}{4.0}{1} & \shpbn{26.5}{21.3}{5.2}{1.3}
      & \shpbn{71.9}{61.6}{10.3}{1} & \shpn{35.4}{35.0}{1.7}{3.3} & \shpbn{82.4}{78.4}{4.0}{1}
      & \shpbn{46.2}{41.1}{5.1}{1} & \shpbn{90.2}{67.9}{22.3}{1} \\
      \midrule

      \rowcolor{groupgray}
      \multicolumn{9}{@{}l@{}}{\hspace{3pt}\textbf{DeepSeek-R1-Distill-Qwen-1.5B}\quad{\color{deltaGray}Teacher: JustRL-DeepSeek-1.5B}} \\
      Student
      & 31.0 & 26.0 & 19.4
      & 65.8 & 29.6 & 60.9
      & 38.8 & 0.0 \\
      \tv{Teacher}$^{\dagger}$
      & \tv{44.8} & \tv{32.9} & \tv{32.3}
      & \tv{85.5} & \tv{34.6} & \tv{79.8}
      & \tv{51.7} & \tv{100.0} \\
      \midrule
      \makecell[l]{Student-Generated\\OPD}
      & 37.9 & 30.4 & 26.9
      & 77.8 & 34.0 & 82.2
      & 48.2 & 73.2 \\
      KL-Filtered OPD
      & \shpn{37.5}{37.9}{7.5}{0.4} & \shpn{29.2}{30.4}{5.4}{1.2} & \shpn{29.0}{26.9}{2.3}{1}
      & \shpn{79.1}{77.8}{5.8}{1} & \shpn{35.2}{34.0}{1.8}{1} & \shpn{81.7}{82.2}{1}{0.7}
      & \shpn{48.6}{48.2}{3.4}{1} & \shpn{76.4}{73.2}{26.7}{1} \\
      Vanilla OPD \cite{rethink_opd1}
      & \shpbn{45.4}{37.9}{7.5}{0.4} & \shpn{32.7}{30.4}{5.4}{1.2} & \shpbn{29.2}{26.9}{2.3}{1}
      & \shpn{78.1}{77.8}{5.8}{1} & \shpbn{35.8}{34.0}{1.8}{1} & \shpn{81.5}{82.2}{1}{0.7}
      & \shpn{50.5}{48.2}{3.4}{1} & \shpn{90.7}{73.2}{26.7}{1} \\
      Evol-Instruct \cite{xu2024wizardlm}
      & \shpn{41.5}{37.9}{7.5}{0.4}
      & \shpn{32.5}{30.4}{5.4}{1.2}
      & \shpn{28.1}{26.9}{2.3}{1}
      & \shpn{79.5}{77.8}{5.8}{1}
      & \shpn{35.0}{34.0}{1.8}{1}
      & \shpbn{82.3}{82.2}{1}{0.7}
      & \shpn{49.8}{48.2}{3.4}{1}
      & \shpn{85.6}{73.2}{26.7}{1} \\

      R-Zero \cite{huang2025rzero}
      & \shpn{42.1}{37.9}{7.5}{0.4}
      & \shpn{32.9}{30.4}{5.4}{1.2}
      & \shpn{28.3}{26.9}{2.3}{1}
      & \shpn{80.0}{77.8}{5.8}{1}
      & \shpn{35.2}{34.0}{1.8}{1}
      & \shpn{81.7}{82.2}{1}{0.7}
      & \shpn{50.0}{48.2}{3.4}{1}
      & \shpn{87.2}{73.2}{26.7}{1} \\
      \textbf{\method}
      & \shpn{45.0}{37.9}{7.5}{0.4} & \shpbn{35.8}{30.4}{5.4}{1.2} & \shpbn{29.2}{26.9}{2.3}{1}
      & \shpbn{83.6}{77.8}{5.8}{1} & \shpn{34.0}{34.0}{1.8}{1} & \shpn{82.2}{82.2}{1}{0.7}
      & \shpbn{51.6}{48.2}{3.4}{1} & \shpbn{99.9}{73.2}{26.7}{1} \\
      \midrule

      \rowcolor{groupgray}
      \multicolumn{9}{@{}l@{}}{\hspace{3pt}\textbf{DeepSeek-R1-Distill-Qwen-7B}\quad{\color{deltaGray}Teacher: Skywork-OR1-Math-7B}} \\
      Student
      & 46.5 & 36.7 & 44.0
      & 84.1 & 49.8 & 87.3
      & 58.1 & 0.0 \\
      \tv{Teacher}$^{\dagger}$
      & \tv{62.7} & \tv{42.9} & \tv{50.8}
      & \tv{89.7} & \tv{59.0} & \tv{91.0}
      & \tv{66.0} & \tv{100.0} \\
      \midrule
      \makecell[l]{Student-Generated\\OPD}
      & 53.1 & 41.0 & 45.0
      & 84.4 & 53.3 & 88.0
      & 60.8 & 34.4 \\
      KL-Filtered OPD
      & \shpn{53.5}{53.1}{4.4}{1} & \shpn{40.2}{41.0}{3.8}{0.8} & \shpn{49.0}{45.0}{7.5}{1}
      & \shpn{87.0}{84.4}{4.8}{1} & \shpbn{55.0}{53.3}{1.7}{0.4} & \shpn{89.3}{88.0}{1.9}{1}
      & \shpn{62.3}{60.8}{3.7}{1} & \shpn{53.7}{34.4}{46.9}{1} \\
      Vanilla OPD \cite{rethink_opd1}
      & \shpn{54.8}{53.1}{4.4}{1} & \shpn{41.5}{41.0}{3.8}{0.8} & \shpn{50.0}{45.0}{7.5}{1}
      & \shpn{88.8}{84.4}{4.8}{1} & \shpn{52.9}{53.3}{1.7}{0.4} & \shpn{89.5}{88.0}{1.9}{1}
      & \shpn{62.9}{60.8}{3.7}{1} & \shpn{61.0}{34.4}{46.9}{1} \\
      Evol-Instruct \cite{xu2024wizardlm}
      & \shpn{54.6}{53.1}{4.4}{1}
      & \shpn{41.5}{41.0}{3.8}{0.8}
      & \shpn{49.0}{45.0}{7.5}{1}
      & \shpn{86.8}{84.4}{4.8}{1}
      & \shpn{54.0}{53.3}{1.7}{0.4}
      & \shpbn{90.4}{88.0}{1.9}{1}
      & \shpn{62.7}{60.8}{3.7}{1}
      & \shpn{58.3}{34.4}{46.9}{1} \\

      R-Zero \cite{huang2025rzero}
      & \shpn{55.0}{53.1}{4.4}{1}
      & \shpn{42.1}{41.0}{3.8}{0.8}
      & \shpn{49.6}{45.0}{7.5}{1}
      & \shpn{87.0}{84.4}{4.8}{1}
      & \shpn{52.5}{53.3}{1.7}{0.4}
      & \shpn{89.0}{88.0}{1.9}{1}
      & \shpn{62.5}{60.8}{3.7}{1}
      & \shpn{55.8}{34.4}{46.9}{1} \\
      \textbf{\method}
      & \shpbn{57.5}{53.1}{4.4}{1} & \shpbn{44.8}{41.0}{3.8}{0.8} & \shpbn{52.5}{45.0}{7.5}{1}
      & \shpbn{89.2}{84.4}{4.8}{1} & \shpn{53.3}{53.3}{1.7}{0.4} & \shpn{89.9}{88.0}{1.9}{1}
      & \shpbn{64.5}{60.8}{3.7}{1} & \shpbn{81.3}{34.4}{46.9}{1} \\
      \bottomrule
    \end{tabular}%
  }

  \vspace{3pt}
  \begin{minipage}{\textwidth}
    \scriptsize
    $^{\dagger}$ Teachers are reference models.
    Recovery: $(\mathrm{Avg.}_{\mathrm{method}}-\mathrm{Avg.}_{\mathrm{student}})/(\mathrm{Avg.}_{\mathrm{teacher}}-\mathrm{Avg.}_{\mathrm{student}})\times100\%$, computed from unrounded averages and rounded to one decimal place.
  \end{minipage}
\end{table*}%
\subsection{Optimizing the synthetic data distribution}
\label{sec:synthetic_optimization}

We initialize $S_\theta$ from the student to use its existing generative capabilities as the starting point for question construction. Shared initialization supplies a prior for generation; the reward determines which questions are favored for the target student--teacher pair. Throughout this stage, the scoring student and teacher remain frozen.

For each question-generation instruction $x$, we sample a group of $g$ candidate questions from $S_\theta(\cdot\mid x)$ under a question budget $L_q$. The synthesis and student-solving prompt templates are provided in Appendix~\ref{app:prompts}. For each candidate $q_i$, the frozen student generates one response, and the teacher is evaluated on the same response prefixes. These predictions provide the token scores and response-length factor. We compute the batch-diversity factor across the full scoring batch and assign each candidate its reward from Eq.~\eqref{eq:reward}.

We optimize the synthesizer with GRPO, normalizing rewards within each group of questions sampled from the same instruction. The resulting group-relative advantage is applied to the generated tokens of the corresponding question. Thus, the optimization updates the policy that generates questions; the student responses are used only to score those questions. Operationally, the question-level reward is attached to the final generated question token as a sequence-level outcome reward. This reward closes the feedback loop for question generation: it evaluates a candidate through the supervision exposed by a frozen student--teacher pair and uses that signal to improve the synthesizer. Candidate scoring requires neither reference answers nor task-specific correctness rewards, and does not involve training a student on each candidate. It therefore provides a distributional surrogate for question value before downstream distillation.

\subsection{Synthetic data curation}
\label{sec:bank}

After synthesizer optimization, we sample $N$ raw outputs from the selected checkpoint.
 We first filter malformed or incomplete outputs. When a valid question is followed by model self-talk, we retain the question by truncating at the earliest meta-commentary marker. We otherwise reject outputs with repeated $n$-grams, low lexical diversity, unbalanced mathematical delimiters, answer leakage, missing context, or references to absent figures.

We then deduplicate the remaining questions in two passes.
 Token-set Jaccard matching removes lexical near-duplicates whose similarity reaches a threshold $\tau_{\mathrm{lex}}$. A subsequent greedy semantic pass uses a frozen Qwen3-Embedding-4B model~\citep{qwen3-embedding}: a question is retained only if the cosine similarity of its embedding to that of every previously retained question is below $\tau_{\mathrm{sem}}$. This embedding model is used only for post-generation deduplication and does not participate in synthesizer training or reward computation.

The retained questions form a prompt-only synthetic bank. Any placeholder answers stored for file-format compatibility are ignored during OPD; all training responses are sampled from the student.

\subsection{On-policy distillation}
\label{sec:opd}

We use the curated question bank for candidate-restricted OPD, starting from the original student parameters. For each question, the current rollout policy $\pi_{S,\mathrm{old}}$ samples $n_{\mathrm{roll}}$ responses under the response budget $L_{\max}$ used for question scoring. The teacher remains frozen and is evaluated on the student-generated prefixes.

At each response position $t$, let $p_{S,\mathrm{old}}(v)=\pi_{S,\mathrm{old}}(v\mid c_t)$ and $p_T(v)=\pi_T(v\mid c_t)$. We construct $A_t=\Topk(p_{S,\mathrm{old}})$ with the same $K$ as in \S\ref{sec:reward} and define the student probabilities normalized over this set:
\begin{equation}
  \bar p_{S,\mathrm{old}}^{A}(v)
  =
  \frac{p_{S,\mathrm{old}}(v)}
       {\sum_{u\in A_t}p_{S,\mathrm{old}}(u)},
  \qquad v\in A_t.
\end{equation}
The superscript $A$ distinguishes this normalization from the union-support normalization used to score questions in \S\ref{sec:reward}.
\begin{table*}[!t]
  \centering
  \caption{
    Results across six code benchmarks for three student--teacher pairs.
    Within each block and metric, shading is relative to Student-Generated OPD
    (white): \textcolor{OPDBlue}{blue} indicates improvement and
    \textcolor{OPDRed}{red} indicates degradation. Positive and negative changes
    are normalised separately across OPD methods and mapped nonlinearly
    (exponent 1.5) to muted palettes.
    Bold indicates the best result among OPD methods, including ties.
    The baselines compare real-bank OPD, untrained student-generated prompts,
    KL-filtered selection, and two additional code-generation methods.
  }
  \label{tab:code_main}

  \footnotesize
  \setlength{\tabcolsep}{6pt}
  \renewcommand{\arraystretch}{1.2}

  \resizebox{\textwidth}{!}{%
    \begin{tabular}{@{}l cccccc @{\hspace{14pt}} c c@{}}
      \toprule
      Method & LCB & MBPP+ & APPS-Intro & APPS-Interv. & APPS-Comp.
      & CodeContests & Avg. & Recovery (\%) \\
      \midrule

      \rowcolor{groupgray}
      \multicolumn{9}{@{}l@{}}{\hspace{3pt}\textbf{Qwen3-4B (Non-thinking)}\quad{\color{deltaGray}Teacher: Qwen3-4B-Instruct-2507}} \\
      Student
      & 21.6 & 68.8 & 46.4 & 19.1 & 8.8 & 7.3 & 28.7 & 0.0 \\
      \tv{Teacher}$^{\dagger}$
      & \tv{33.7} & \tv{72.0} & \tv{65.6} & \tv{41.0} & \tv{19.7} & \tv{18.8}
      & \tv{41.8} & \tv{100.0} \\
      \midrule
      \makecell[l]{Student-Generated\\OPD}
      & 30.2 & 67.2 & 55.8 & 27.8 & 10.4 & 9.7 & 33.5 & 36.9 \\
      KL-Filtered OPD
      & \shpn{31.1}{30.2}{5.7}{1} & \shpn{67.7}{67.2}{1.3}{1} & \shpn{58.2}{55.8}{8.2}{1}
      & \shpn{30.0}{27.8}{8.2}{0.2} & \shpn{11.5}{10.4}{5.5}{1} & \shpn{11.5}{9.7}{7.3}{1}
      & \shpn{35.0}{33.5}{6.1}{1} & \shpn{48.2}{36.9}{46.0}{1} \\
      Vanilla OPD \cite{rethink_opd1}
      & \shpn{31.0}{30.2}{5.7}{1} & \shpn{68.5}{67.2}{1.3}{1} & \shpn{57.1}{55.8}{8.2}{1}
      & \shpn{27.6}{27.8}{8.2}{0.2} & \shpn{10.7}{10.4}{5.5}{1} & \shpn{12.1}{9.7}{7.3}{1}
      & \shpn{34.5}{33.5}{6.1}{1} & \shpn{44.4}{36.9}{46.0}{1} \\
      Evol-Instruct \cite{xu2024wizardlm}
      & \shpn{30.3}{30.2}{5.7}{1} & \shpn{68.5}{67.2}{1.3}{1} & \shpn{56.8}{55.8}{8.2}{1}
      & \shpn{27.8}{27.8}{8.2}{0.2} & \shpn{10.0}{10.4}{5.5}{1} & \shpn{10.3}{9.7}{7.3}{1}
      & \shpn{34.0}{33.5}{6.1}{1} & \shpn{40.2}{36.9}{46.0}{1} \\
      SQLM \cite{chen2025selfquestioning}
      & \shpn{31.3}{30.2}{5.7}{1} & \shpbn{69.8}{67.2}{1.3}{1} & \shpn{57.7}{55.8}{8.2}{1}
      & \shpn{28.9}{27.8}{8.2}{0.2} & \shpn{10.9}{10.4}{5.5}{1} & \shpn{10.9}{9.7}{7.3}{1}
      & \shpn{34.9}{33.5}{6.1}{1} & \shpn{47.6}{36.9}{46.0}{1} \\
      \textbf{\method}
      & \shpbn{35.9}{30.2}{5.7}{1} & \shpn{68.5}{67.2}{1.3}{1} & \shpbn{64.0}{55.8}{8.2}{1}
      & \shpbn{36.0}{27.8}{8.2}{0.2} & \shpbn{15.9}{10.4}{5.5}{1} & \shpbn{17.0}{9.7}{7.3}{1}
      & \shpbn{39.6}{33.5}{6.1}{1} & \shpbn{82.9}{36.9}{46.0}{1} \\
      \midrule

      \rowcolor{groupgray}
      \multicolumn{9}{@{}l@{}}{\hspace{3pt}\textbf{Qwen3-4B (Non-thinking)}\quad{\color{deltaGray}Teacher: Qwen3-30B-A3B-Instruct-2507}} \\
      Student
      & 21.6 & 68.8 & 46.4 & 19.1 & 8.8 & 7.3 & 28.7 & 0.0 \\
      \tv{Teacher}$^{\dagger}$
      & \tv{42.3} & \tv{75.9} & \tv{73.4} & \tv{51.2} & \tv{30.4} & \tv{26.7}
      & \tv{50.0} & \tv{100.0} \\
      \midrule
      \makecell[l]{Student-Generated\\OPD}
      & 28.0 & 66.9 & 50.5 & 20.0 & 5.5 & 6.1 & 29.5 & 3.9 \\
      KL-Filtered OPD
      & \shpn{29.4}{28.0}{3.7}{1} & \shpn{66.9}{66.9}{1.9}{1} & \shpn{53.4}{50.5}{7.8}{1}
      & \shpn{25.1}{20.0}{11.6}{1} & \shpn{8.4}{5.5}{5.5}{1.3} & \shpn{9.7}{6.1}{7.8}{1.3}
      & \shpn{32.2}{29.5}{6.2}{1} & \shpn{16.3}{3.9}{28.9}{1} \\
      Vanilla OPD \cite{rethink_opd1}
      & \shpn{30.6}{28.0}{3.7}{1} & \shpbn{68.8}{66.9}{1.9}{1} & \shpn{51.3}{50.5}{7.8}{1}
      & \shpn{21.3}{20.0}{11.6}{1} & \shpn{4.2}{5.5}{5.5}{1.3} & \shpn{4.8}{6.1}{7.8}{1.3}
      & \shpn{30.2}{29.5}{6.2}{1} & \shpn{7.0}{3.9}{28.9}{1} \\
      Evol-Instruct \cite{xu2024wizardlm}
      & \shpn{28.8}{28.0}{3.7}{1}
      & \shpn{67.1}{66.9}{1.9}{1}
      & \shpn{51.9}{50.5}{7.8}{1}
      & \shpn{22.5}{20.0}{11.6}{1}
      & \shpn{5.0}{5.5}{5.5}{1.3}
      & \shpn{5.0}{6.1}{7.8}{1.3}
      & \shpn{30.1}{29.5}{6.2}{1}
      & \shpn{6.5}{3.9}{28.9}{1} \\

      SQLM \cite{chen2025selfquestioning}
      & \shpn{29.9}{28.0}{3.7}{1}
      & \shpn{67.5}{66.9}{1.9}{1}
      & \shpn{53.5}{50.5}{7.8}{1}
      & \shpn{23.7}{20.0}{11.6}{1}
      & \shpn{5.7}{5.5}{5.5}{1.3}
      & \shpn{6.3}{6.1}{7.8}{1.3}
      & \shpn{31.1}{29.5}{6.2}{1}
      & \shpn{11.4}{3.9}{28.9}{1} \\
      \textbf{\method}
      & \shpbn{31.7}{28.0}{3.7}{1} & \shpn{67.5}{66.9}{1.9}{1} & \shpbn{58.3}{50.5}{7.8}{1}
      & \shpbn{31.6}{20.0}{11.6}{1} & \shpbn{11.0}{5.5}{5.5}{1.3} & \shpbn{13.9}{6.1}{7.8}{1.3}
      & \shpbn{35.7}{29.5}{6.2}{1} & \shpbn{32.8}{3.9}{28.9}{1} \\
      \midrule

      \rowcolor{groupgray}
      \multicolumn{9}{@{}l@{}}{\hspace{3pt}\textbf{Qwen2.5-Coder-3B-Instruct}\quad{\color{deltaGray}Teacher: Qwen2.5-Coder-32B-Instruct}} \\
      Student
      & 13.7 & 64.0 & 28.7 & 6.6 & 3.0 & 2.4 & 19.7 & 0.0 \\
      \tv{Teacher}$^{\dagger}$
      & \tv{27.9} & \tv{76.7} & \tv{51.8} & \tv{18.8} & \tv{11.1} & \tv{5.5}
      & \tv{32.0} & \tv{100.0} \\
      \midrule
      \makecell[l]{Student-Generated\\OPD}
      & 14.5 & 64.3 & 29.0 & 6.3 & 2.8 & 1.8 & 19.8 & 0.4 \\
      KL-Filtered OPD
      & \shpn{15.3}{14.5}{2.5}{1} & \shpn{64.8}{64.3}{0.8}{0.3} & \shpn{30.0}{29.0}{2.6}{1}
      & \shpn{6.8}{6.3}{1.3}{1} & \shpn{3.4}{2.8}{0.9}{1} & \shpbn{2.4}{1.8}{0.6}{1.2}
      & \shpn{20.5}{19.8}{1.4}{1} & \shpn{5.9}{0.4}{11.9}{1} \\
      Vanilla OPD \cite{rethink_opd1}
      & \shpn{16.2}{14.5}{2.5}{1} & \shpn{64.0}{64.3}{0.8}{0.3} & \shpn{30.7}{29.0}{2.6}{1}
      & \shpn{6.5}{6.3}{1.3}{1} & \shpn{2.9}{2.8}{0.9}{1} & \shpn{0.6}{1.8}{0.6}{1.2}
      & \shpn{20.2}{19.8}{1.4}{1} & \shpn{3.4}{0.4}{11.9}{1} \\
      Evol-Instruct \cite{xu2024wizardlm}
      & \shpn{15.5}{14.5}{2.5}{1}
      & \shpn{64.6}{64.3}{0.8}{0.3}
      & \shpn{29.8}{29.0}{2.6}{1}
      & \shpn{6.9}{6.3}{1.3}{1}
      & \shpn{3.2}{2.8}{0.9}{1}
      & \shpn{1.6}{1.8}{0.6}{1.2}
      & \shpn{20.3}{19.8}{1.4}{1}
      & \shpn{4.4}{0.4}{11.9}{1} \\

      SQLM \cite{chen2025selfquestioning}
      & \shpn{15.8}{14.5}{2.5}{1}
      & \shpn{64.8}{64.3}{0.8}{0.3}
      & \shpn{30.4}{29.0}{2.6}{1}
      & \shpn{7.1}{6.3}{1.3}{1}
      & \shpn{3.4}{2.8}{0.9}{1}
      & \shpn{1.8}{1.8}{0.6}{1.2}
      & \shpn{20.6}{19.8}{1.4}{1}
      & \shpn{6.7}{0.4}{11.9}{1} \\
      \textbf{\method}
      & \shpbn{17.0}{14.5}{2.5}{1} & \shpbn{65.1}{64.3}{0.8}{0.3} & \shpbn{31.6}{29.0}{2.6}{1}
      & \shpbn{7.6}{6.3}{1.3}{1} & \shpbn{3.7}{2.8}{0.9}{1} & \shpbn{2.4}{1.8}{0.6}{1.2}
      & \shpbn{21.2}{19.8}{1.4}{1} & \shpbn{12.3}{0.4}{11.9}{1} \\

      \bottomrule
    \end{tabular}%
  }

  \vspace{3pt}
  \begin{minipage}{\textwidth}
    \scriptsize
    $^{\dagger}$ Teachers are reference models.
    Recovery:
    $(\mathrm{Avg.}_{\mathrm{method}}-\mathrm{Avg.}_{\mathrm{student}})
    /(\mathrm{Avg.}_{\mathrm{teacher}}-\mathrm{Avg.}_{\mathrm{student}})\times100\%$,
    computed from unrounded averages and rounded to one decimal place.
  \end{minipage}
\end{table*}%
 Each candidate receives the fixed advantage
\begin{equation}
  r_t(v)
  =
  -\bar p_{S,\mathrm{old}}^{A}(v)
  \left[
    \log p_{S,\mathrm{old}}(v)
    -
    \log p_T(v)
  \right],
  \qquad v\in A_t.
  \label{eq:opdreward}
\end{equation}
The log-probabilities in brackets are full-vocabulary probabilities; only the weighting factor $\bar p_{S,\mathrm{old}}^{A}$ is normalized over $A_t$. The candidate sets and advantages are computed from the rollout policy and held fixed during the subsequent update.

Let $p_{S,\phi}(v)=\pi_{S,\phi}(v\mid c_t)$ denote the trainable student's probability and define
\begin{equation}
  \rho_t(v)
  =
  \frac{p_{S,\phi}(v)}
       {p_{S,\mathrm{old}}(v)}.
\end{equation}
\begin{figure*}[!t]
  \centering
  \includegraphics[width=\textwidth]{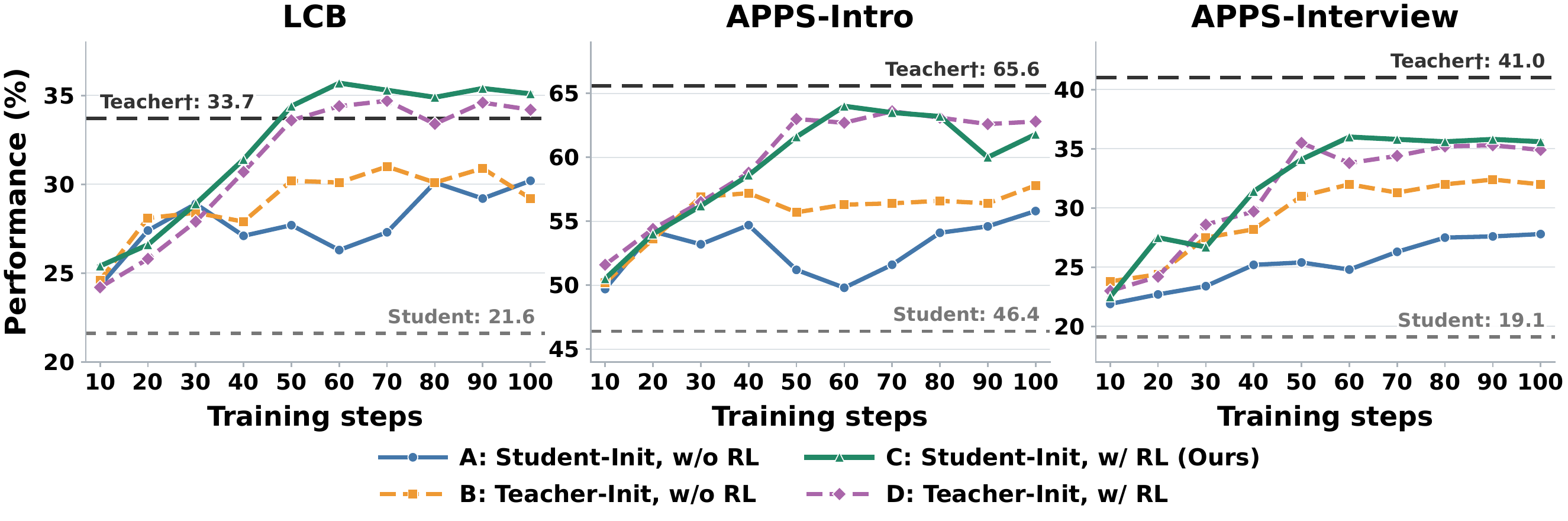}
  \caption{Downstream performance during training for student- and
    teacher-initialized question synthesizers, with and without pair-specific RL,
    using Qwen3-4B (non-thinking) as the student and
    Qwen3-4B-Instruct-2507 as the teacher, evaluated on LCB, APPS-Intro, and
    APPS-Interview. Dashed lines show the
    student and teacher reference scores. RL improves both initializations;
    student initialization with RL is competitive across benchmarks and exceeds
    the teacher reference on LCB, while remaining below it on APPS-Intro and
  APPS-Interview.}
  \label{fig:synthesizer_source}
\end{figure*}%

\begin{table}[t]
  \centering
  \caption{
    Results across the three $\tau^2$-Bench domains.
    Within each metric, shading is relative to Student-Generated OPD
    (white): \textcolor{OPDBlue}{blue} indicates improvement and
    \textcolor{OPDRed}{red} indicates degradation. Positive and negative
    changes are normalised separately across OPD methods and mapped
    nonlinearly (exponent 1.5) to muted palettes.
    Bold indicates the best result among OPD methods, including ties.
    The three baselines isolate the effects of synthesizer training
    and learning-value-based question generation.
  }
  \label{tab:tau2_results}

  \footnotesize
  \setlength{\tabcolsep}{4pt}
  \renewcommand{\arraystretch}{1.2}

  \resizebox{\columnwidth}{!}{%
  \begin{tabular}{@{}l ccc @{\hspace{8pt}} c c@{}}
    \toprule
    Method & Airline & Retail & Telecom & Avg.
    & \makecell{Recovery\\(\%)} \\
    \midrule

    \rowcolor{groupgray}
    \multicolumn{6}{@{}l@{}}{%
      \makecell[l]{%
        \textbf{Qwen3-4B (Non-thinking)}\\
        \scriptsize\textcolor{deltaGray}{%
          Teacher: Qwen3-4B-Thinking-2507 (Non-thinking)}%
      }%
    } \\

    Student
    & 15.0 & 17.5 & 10.6 & 14.4 & 0.0 \\

    \tv{Teacher}$^{\dagger}$
    & \tv{47.5} & \tv{48.1} & \tv{34.4}
    & \tv{43.3} & \tv{100.0} \\
    \midrule

    \makecell[l]{Student-Generated\\OPD}
    & 42.5 & 42.5 & 26.9 & 37.3 & 79.2 \\

    KL-Filtered OPD
    & \shpn{45.0}{42.5}{3.8}{1}
    & \shpn{48.8}{42.5}{6.9}{1}
    & \shpn{28.8}{26.9}{3.1}{1}
    & \shpn{40.9}{37.3}{4.6}{1}
    & \shpn{91.7}{79.2}{16.0}{1} \\

    Vanilla OPD
    & \shpn{45.0}{42.5}{3.8}{1}
    & \shpn{46.9}{42.5}{6.9}{1}
    & \shpn{29.4}{26.9}{3.1}{1}
    & \shpn{40.4}{37.3}{4.6}{1}
    & \shpn{90.0}{79.2}{16.0}{1} \\

    \textbf{\method}
    & \shpbn{46.3}{42.5}{3.8}{1}
    & \shpbn{49.4}{42.5}{6.9}{1}
    & \shpbn{30.0}{26.9}{3.1}{1}
    & \shpbn{41.9}{37.3}{4.6}{1}
    & \shpbn{95.2}{79.2}{16.0}{1} \\
    \bottomrule
  \end{tabular}%
  }

  \par\vspace{3pt}
  \begin{minipage}{\columnwidth}
    \scriptsize
    $^{\dagger}$ The teacher is a reference model.
    Recovery:
    $(\mathrm{Avg.}_{\mathrm{method}}-\mathrm{Avg.}_{\mathrm{student}})
    /(\mathrm{Avg.}_{\mathrm{teacher}}-\mathrm{Avg.}_{\mathrm{student}})
    \times100\%$,
    computed from unrounded averages and rounded to one decimal place.
  \end{minipage}
\end{table}

We use the candidate advantages directly in a PPO-style clipped objective \cite{ppo}:
\begin{equation}
  \begin{aligned}
    \widehat r_t(v)
    ={}&
    \min\!\left\{
      \rho_t(v)r_t(v),
      \right.\\
      &\left.
      \mathrm{clip}\!\left(
        \rho_t(v),1-\varepsilon,1+\varepsilon
      \right)r_t(v)
    \right\},\\
    \mathcal{L}(\phi)
    ={}&
    -\mathbb{E}\!\left[
      \frac{
        \sum_t m_t
        \sum_{v\in A_t}\widehat r_t(v)
      }{
        \sum_t m_t
      }
    \right],
  \end{aligned}
\end{equation}
where $m_t$ masks valid response positions and the expectation is over sampled questions and student responses. Gradients flow through $p_{S,\phi}$, while the rollout probabilities, candidate sets, and advantages are treated as constants.
 We use a single inner epoch per rollout batch, without a value function, return accumulation, group normalization of these advantages, or a KL anchor to a reference model.

At the beginning of an update, the trainable student matches the rollout policy and $\rho_t(v)=1$; the ratios can change as optimization proceeds. The advantages provide a student-probability-weighted, reverse-KL-style learning signal restricted to the student's top-$K$ candidates. We do not identify this clipped, candidate-restricted objective with exact full-vocabulary reverse-KL minimization.

The question-generation reward and the downstream update serve different roles. The forward-KL discrepancy in Eq.\eqref{eq:div} identifies teacher-preferred candidates that the scoring student underweights, while compatibility measures how much teacher probability mass lies within the candidate set used for training. Their combination scores the supervision a question exposes without reproducing the downstream loss or directly estimating its gradient magnitude. We hold the downstream OPD procedure fixed across prompt-source comparisons to assess the effect of the constructed question distribution.

\section{Experiments}
\label{sec:experiments}

\subsection{Models}
\label{sec:models}

\paragraph{Mathematical reasoning.} We evaluate three student--teacher pairs spanning different model sizes and reasoning modes. The first pair uses Qwen3-1.7B \cite{qwen3} in non-thinking mode as the student and Qwen3-4B-Non-Thinking-RL-Math \cite{qwen3-4b-non-thinking-rl-math} as the teacher. The other two pairs use DeepSeek-R1-Distill-Qwen-1.5B and DeepSeek-R1-Distill-Qwen-7B~\citep{deepseek-r1}, paired with JustRL-DeepSeek-1.5B~\citep{justrl} and Skywork-OR1-Math-7B~\citep{skywork}, respectively. These pairs cover both non-thinking and reasoning models, including settings where the student and teacher have the same parameter count but differ in post-training.

\paragraph{Code generation.} We evaluate Qwen3-4B in non-thinking mode with two teachers: Qwen3-4B-Instruct-2507 and Qwen3-30B-A3B-Instruct-2507~\citep{qwen3}. This comparison examines distillation from different teachers while keeping the initial student fixed. We additionally evaluate Qwen2.5-Coder-3B-Instruct with Qwen2.5-Coder-32B-Instruct~\citep{qwen2.5-coder}, extending the evaluation to a code-specialized model family.

\paragraph{Agentic tasks and model roles.} For agentic tasks, we use Qwen3-4B as the student and Qwen3-4B-Thinking-2507 as the teacher, both configured in non-thinking mode. In our default setup, the question synthesizer is initialized as an independent copy of the corresponding student. During synthesizer optimization, the scoring student and teacher remain frozen; downstream OPD starts from the original student parameters and updates only the student. We separately examine teacher-initialized synthesizers in the synthesizer-initialization ablation. Exact checkpoint identifiers and inference configurations are provided in the appendix.

\paragraph{Auxiliary models.} For agentic tasks, we use GPT-4o-mini \cite{openai2024gpt4omini} as the user simulator and judge. For semantic deduplication of the generated question bank, we use a frozen Qwen3-Embedding-4B model~\citep{qwen3-embedding} only after synthesizer training. It is not used to compute training-time diversity rewards or optimize the synthesizer. These auxiliary models remain fixed throughout our experiments.

\subsection{Baselines}
\label{sec:baselines}

We compare \method with the following baselines and reference models: (1) \textbf{Student and Teacher}: the initial student before distillation and its corresponding teacher, which establish the starting performance and the reference for teacher-gap recovery; (2) \textbf{Vanilla OPD} \cite{rethink_opd1}: on-policy distillation using the original question bank for each domain, DAPO-17k-Math~\citep{dapo} for mathematical reasoning, OpenCodeReasoning~\citep{code-dataset} for code generation, and the official $\tau^2$-Bench training set~\citep{tau2} for agentic tasks---where the student generates responses and the teacher provides token-level supervision on student-generated prefixes, serving as the standard real-data reference; (3) \textbf{Student-Generated OPD}: OPD using questions sampled directly from the initial student without synthesizer optimization, evaluating whether explicitly training the question-generation policy provides benefits beyond the student's existing generation capability; (4) \textbf{KL-Filtered OPD}: OPD using the 20\% of real questions with the highest teacher--student KL disagreement, testing whether retaining high-disagreement questions alone provides effective distillation data;
\begin{figure}[!t]
  \centering
  \includegraphics[width=0.49\textwidth]{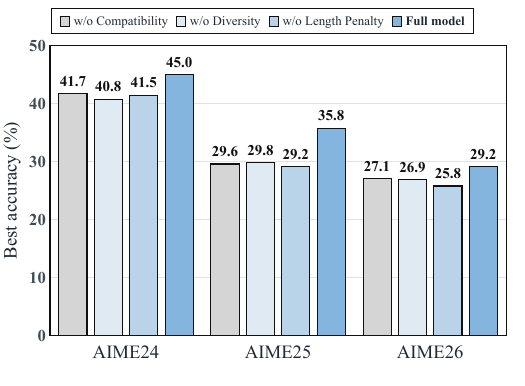}
  \caption{Reward-component ablation on AIME24, AIME25, and AIME26. Numbers above bars indicate accuracy (\%). Removing compatibility, the
    training-time diversity term, or the explicit length penalty reduces
  accuracy on all three benchmarks.}
  \label{fig:reward_ablation}
\end{figure}%

\begin{table}[t]
  \centering
  \caption{Iterative training on code pair A, with student Qwen3-4B
  (non-thinking) and teacher Qwen3-30B-A3B-Instruct-2507. Avg.\ is the mean
  of LiveCodeBench v5\_v6, APPS-Intro and APPS-Interview. Recovery
  $=(\text{Avg}-\text{student})/(\text{teacher}-\text{student})$. Dashes indicate entries that are not applicable.}
  \label{tab:iterative_training}
  \footnotesize
  \setlength{\tabcolsep}{3pt}
  \begin{tabular}{@{}lccc@{}}
    \toprule
    \textbf{Training variant}
    & \textbf{Avg.}
    & \makecell{\textbf{Gain over}\\\textbf{round 1}}
    & \makecell{\textbf{Recovery}\\\textbf{(\%)}} \\ \midrule
    Student & 29.0 & - & 0.0 \\
    Teacher & 55.6 & - & 100.0 \\
    \midrule
    \method: round 1 & 40.5 & 0.0 & 43.3 \\
    \method: round 2 & 40.8 & +0.3 & 44.3 \\
    \rowcolor{methodBg}
    \method: final round ($R=3$) & 41.0 & +0.5 & 44.8 \\
    \bottomrule
  \end{tabular}
\end{table}%

 (5) \textbf{Evol-Instruct}~\citep{xu2024wizardlm}: an instruction-evolution method that iteratively rewrites seed instructions to increase complexity and diversity, which we adapt to construct mathematical and programming questions for OPD;
 (6) \textbf{R-Zero}~\citep{huang2025rzero}: a self-evolution framework in which a Challenger proposes tasks and a Solver learns to solve them, with Challenger optimization guided by feedback about the Solver's capabilities, whose task-generation procedure we use to construct mathematical questions for OPD;
 (7) \textbf{SQLM}~\citep{chen2025selfquestioning}: an asymmetric self-play framework with a topic-conditioned proposer and a solver, whose coding formulation uses generated unit tests and solver test performance to encourage non-trivial but solvable tasks, providing programming questions for downstream OPD. Evol-Instruct is evaluated in both domains, while R-Zero and SQLM are included in the mathematical and code-generation comparisons, respectively. For these question-bank comparisons, the generated or selected questions are used as prompts for the same downstream OPD procedure. We reproduce Evol-Instruct, R-Zero, and SQLM using their official implementations.

\subsection{Datasets and Evaluation Metrics}
\label{sec:datasets_metrics}

\paragraph{Mathematical reasoning.} We evaluate on six benchmarks: AIME24~\citep{aime24}, AIME25~\citep{aime25}, AIME26~\citep{brumo2025_aime26}, AMC23~\citep{amc23}, BRUMO25~\citep{brumo2025_aime26}, and MATH500~\citep{math500, math500_2}. Each AIME benchmark and BRUMO25 contains 30 problems, AMC23 contains 40, and MATH500 contains 500. These benchmarks cover competition-level mathematical reasoning across algebra, geometry, number theory, and combinatorics. We evaluate final-answer correctness using rule-based normalization and symbolic equivalence checking.

\paragraph{Code generation.} We evaluate on six benchmarks: LiveCodeBench~\citep{lcb}, MBPP+~\citep{mbpp, mbpp+}, the introductory, interview, and competition subsets of APPS~\citep{apps}, and CodeContests~\citep{codecontests}. We use the LiveCodeBench v5--v6 subset with 342 problems and MBPP+ v0.2.0 with 378 problems. For APPS, we use all 1,000 introductory and 1,000 competition test problems, together with a fixed sample of 1,000 interview test problems. CodeContests contains 165 test problems. A generated program is considered correct only if it passes every test case used by the evaluator within the time limit. For MBPP+, we apply entry-point name normalization when the required function is absent and the generated program defines exactly one top-level function.

\paragraph{Agentic tasks.} We additionally evaluate on the Airline, Retail, and Telecom domains of $\tau^2$-Bench~\citep{tau2}, using the official test splits containing 20, 40, and 40 tasks, respectively. These tasks require an agent to interact with a simulated user and invoke tools to complete domain-specific objectives. We run four trials per task and report $\mathrm{pass}^{1}$, the average single-trial task success rate under the benchmark evaluator.

\paragraph{Evaluation metrics.} Following \citet{rethink_opd1}, for mathematical reasoning and code generation, we report $\mathrm{avg}@k$, defined as the fraction of correct responses among $k$ generations per problem, averaged over all problems:
\begin{equation}
\mathrm{avg}@k = \frac{100}{Nk}\sum_{i=1}^{N}\sum_{j=1}^{k}\mathbb{I}\!\left[y_i^{(j)} \text{ is correct}\right],
\end{equation}
where $N$ is the number of problems. This metric measures average response accuracy rather than whether at least one of the $k$ responses is correct. We use $k=16$ for AIME24/25/26, AMC23, and BRUMO25; $k=4$ for MATH500 and LiveCodeBench; and $k=1$ for the remaining code benchmarks. MBPP+ uses greedy decoding. Within each domain, \textit{Avg.} is the unweighted mean of the benchmark scores. To measure how much of the initial student--teacher performance gap is recovered, we also report
\begin{equation}
\mathrm{Recovery} = 100 \times \frac{\mathrm{Avg.}_{\mathrm{method}}-\mathrm{Avg.}_{\mathrm{student}}}{\mathrm{Avg.}_{\mathrm{teacher}}-\mathrm{Avg.}_{\mathrm{student}}}.
\end{equation}
A recovery of $100\%$ indicates that the distilled student matches the teacher's average performance, while values above $100\%$ indicate that it exceeds the teacher. Recovery is computed from unrounded averages. Detailed decoding settings and evaluation procedures are provided in Appendix~\ref{sec:eval_details}.

\begin{figure}[!t]
  \centering
  \includegraphics[height=0.45\textheight]{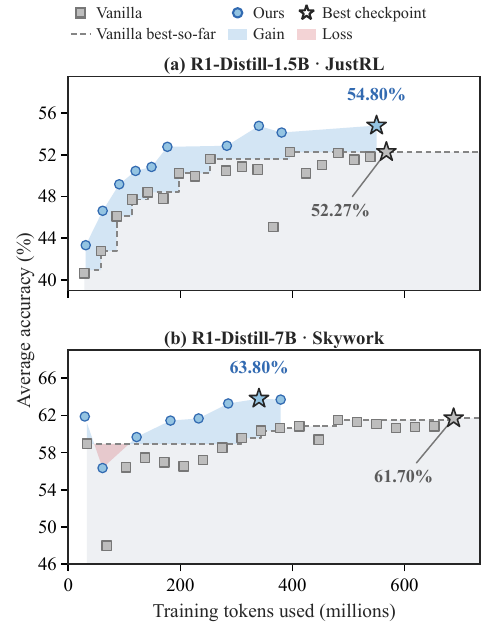}
  \caption{Average accuracy versus cumulative training tokens for (a)
    R1-Distill-1.5B on JustRL and (b) R1-Distill-7B on Skywork. Gray squares
    show Vanilla checkpoints and the dashed line its best-so-far accuracy;
    blue circles show \method checkpoints. Shaded
    regions indicate intervals where \method is above (gain) or below (loss)
    Vanilla's best-so-far reference. Average accuracy is computed over
    AIME24, AIME25, and AMC23, and therefore differs from the six-benchmark
    average in Table~\ref{tab:main}.}
  \label{fig:opd_token_efficiency}
\end{figure}

\begin{figure*}[t]
  \centering
  \includegraphics[width=\textwidth]{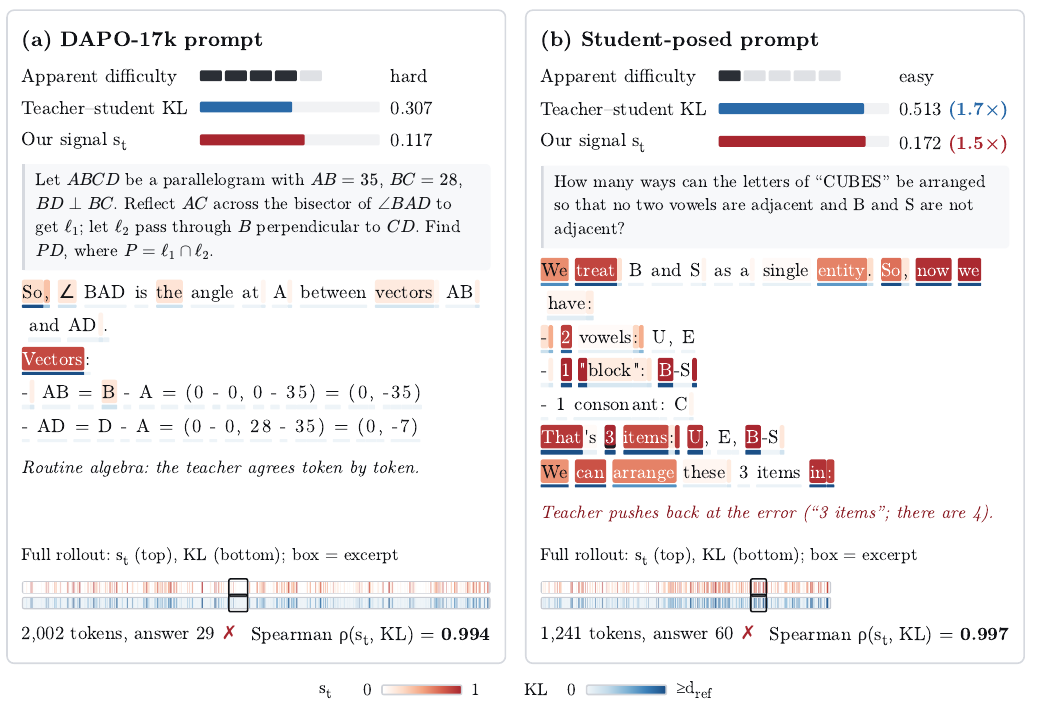}
  \caption{\textbf{Simple prompts can elicit dense distillation signals.} Comparison of (a) a real question from DAPO-17k-Math and (b) a synthetic question generated by our method. Darker token backgrounds indicate higher support-aware scores $s_t$; the strips summarize the complete rollouts, with boxes marking the displayed excerpts. Although both student answers are incorrect, the apparently simpler synthetic question elicits a higher mean $s_t$ (0.172 versus 0.117; approximately $1.5\times$) with a shorter rollout (1,241 versus 2,002 tokens), illustrating that surface complexity alone does not characterize the density of elicited supervision.}
  \label{fig:case_study}
\end{figure*}%
 All reported results are means over three independent runs with different random seeds. Evaluation settings, including decoding parameters and answer verification, are described in Appendix~\ref{sec:eval_details}.

\subsection{Main Results}
\label{sec:main_results}

We evaluate \method across six mathematical reasoning and six code-generation benchmarks, with three student--teacher pairs in each domain. Tables~\ref{tab:main} and~\ref{tab:code_main} report the main results.
 Across all six settings, \method achieves the highest average performance among the compared distillation methods, demonstrating the effectiveness of optimizing synthetic questions for the target student--teacher pair.

\paragraph{Mathematical reasoning.}
As shown in Table~\ref{tab:main}, \method achieves average scores of 46.2, 51.6, and 64.5 for Qwen3-1.7B, R1-Distill-1.5B, and R1-Distill-7B, respectively, compared with 43.0, 50.5, and 62.9 for Vanilla OPD. These results show that optimized synthetic questions can outperform the original real question bank across both non-thinking and reasoning students. The corresponding teacher-gap recovery rates reach 90.2\%, 99.9\%, and 81.3\%; notably, R1-Distill-1.5B nearly matches its teacher's average performance. Relative to Student-Generated OPD, our method improves the reported average scores by 5.1, 3.4, and 3.7 percentage points, indicating that optimizing the synthesizer provides additional value beyond sampling questions directly from the student.

\method also outperforms KL-Filtered OPD, Evol-Instruct, and R-Zero on the average score in every mathematical setting. The comparison with KL-Filtered OPD supports the value of optimizing the prompt distribution using support-aware supervision beyond selecting real questions with large teacher--student disagreement. At the benchmark level, representative gains over Vanilla OPD include an increase from 20.0 to 26.5 on AIME26 for Qwen3-1.7B and from 78.1 to 83.6 on AMC23 for R1-Distill-1.5B. The improvements are not uniform across benchmarks: other methods remain stronger on some BRUMO25 and MATH500 evaluations. Nevertheless, the consistent average gains indicate that our optimized question banks provide effective supervision across the evaluated mathematical tasks.

\paragraph{Code generation.}
Table~\ref{tab:code_main} shows that \method also improves code-generation performance across all three student--teacher pairs.
 Its average scores reach 39.6, 35.7, and 21.2, compared with 34.5, 30.2, and 20.2 for Vanilla OPD, respectively. Our method also exceeds Student-Generated OPD, KL-Filtered OPD, Evol-Instruct, and SQLM in average performance, and achieves the best or tied-best score among the compared distillation methods in 16 of the 18 benchmark--pair combinations. The gains extend across multiple coding benchmarks. For example, with Qwen3-30B-A3B as the teacher, APPS-Interview improves from 21.3 to 31.6 and CodeContests from 4.8 to 13.9 relative to Vanilla OPD. These results extend the benefits of optimizing elicited supervision beyond mathematical reasoning to code generation.

The magnitude of improvement nevertheless varies substantially across student--teacher pairs. Teacher-gap recovery reaches 82.9\% for Qwen3-4B distilled from the 4B teacher, but remains at 32.8\% with the stronger 30B-A3B teacher and 12.3\% for the Qwen2.5-Coder-3B/32B pair. Moreover, for the same Qwen3-4B student, the stronger teacher yields a lower distilled average score than the 4B teacher under our setup. This comparison indicates that teacher benchmark performance alone does not determine downstream distillation effectiveness. Optimizing the question distribution consistently improves the evaluated outcomes, while substantial headroom remains in some student--teacher settings.
\subsection{Ablation: Can a stronger model replace synthetic data optimization?}
\label{sec:synthesizer_source}

A natural alternative to training a synthesizer for each student--teacher pair is
to ask a stronger off-the-shelf model---in our setting, the teacher itself---to
generate the questions. To separate the effect of synthesizer initialization from
that of pair-specific optimization, we compare student- and teacher-initialized
synthesizers, each with and without RL, while holding the downstream distillation
setup fixed. Figure~\ref{fig:synthesizer_source} reports performance over training
for a representative code student--teacher pair; the dashed lines mark the
student and teacher reference scores.

Without RL, teacher initialization improves over student initialization on the
two APPS subsets, but not on LCB, and neither unoptimized synthesizer matches the
student-initialized RL variant across the three benchmarks. Applying RL improves
the trajectories for both initializations, indicating that the benefit is not
explained by simply prompting a stronger model to write questions. The
student-initialized RL variant reaches roughly 35\% on LCB, above the teacher
reference of 33.7\%, and about 62\% on APPS-Intro. On APPS-Interview it reaches
about 36\%; both APPS scores remain below their teacher references (65.6\% and
41.0\%, respectively). The teacher-initialized RL variant is competitive---and
slightly higher at some APPS-Intro checkpoints---but does not consistently
dominate the student-initialized variant. Thus, synthesizer strength alone is not
a substitute for optimizing questions against the target pair, while the choice
of initialization can still affect performance by benchmark. These curves are
descriptive: they show checkpoint trends, not uncertainty across independent
runs.

\subsection{Ablation: which reward components matter?}
\label{sec:reward_ablation}

We ablate compatibility, the training-time diversity term, and the explicit
length penalty, and evaluate the resulting question banks by downstream best
accuracy on AIME24, AIME25, and AIME26 (Figure~\ref{fig:reward_ablation}). The
full reward performs best on all three benchmarks, reaching 45.0\%, 35.8\%, and
29.2\%, respectively. Removing any one of the three components lowers accuracy
on every benchmark, with drops ranging from 2.1 to 6.6 percentage points.

The size of each drop depends on the benchmark. On AIME25, removing
compatibility, diversity, or the length penalty reduces accuracy by 6.2, 6.0,
and 6.6 points, respectively. On AIME24, the largest reduction is 4.2 points
without diversity; on AIME26, it is 3.4 points without the length penalty. These
results suggest that all three components contribute to the performance of the
complete reward, while their relative importance varies across benchmarks. The
figure reports downstream accuracy rather than synthesizer reward, directly
assessing the quality of the prompts for distillation.

\subsection{Applicability to agentic tasks}
\label{sec:agentic_tasks}

To test whether \method applies beyond mathematical reasoning and code
generation, we evaluate it on the three agentic task domains in $\tau^2$-Bench:
Airline, Retail, and Telecom. Table~\ref{tab:tau2_results} reports results for
Qwen3-4B (non-thinking).

Our method reaches 46.3\%, 49.4\%, and 30.0\% in the three domains,
respectively, for an average of 41.9\%. This exceeds Student-Generated OPD by
3.8, 6.9, and 3.1 percentage points, respectively (4.6 points on average), and
raises teacher-gap recovery from 79.2\% to 95.2\%. It also outperforms Vanilla
OPD (40.4\% average) and KL-Filtered OPD (40.9\%) on the average score. For this
student--teacher pair, these results support the applicability of pair-specific
question optimization to agentic task domains as well as to reasoning and code
generation. The average remains 1.4 percentage points below the teacher
reference (43.3\%).

\subsection{Cross-Domain Transfer between Math and Code}
\label{sec:cross_domain_transfer}

Our distribution-elicitation perspective suggests that the benefits of OPD need not be confined to the domain of its training questions. We examine this hypothesis in two directions: code-to-math transfer with Qwen3-4B (non-thinking) distilled from Qwen3-4B-Instruct-2507, and math-to-code transfer with R1-Distill-7B distilled from Skywork-OR1-Math-7B. In each setting, we directly evaluate the student after source-domain OPD, without additional target-domain training, comparing our optimized synthetic questions with the real source-domain questions used by Vanilla OPD. We report gains over the initial student and compute an unweighted average across the target benchmarks.

\paragraph{Code-to-math transfer.}
We evaluate code-trained students on AIME24, AIME26, AMC23, and BRUMO25. As shown in Figure~\ref{fig:math_to_code}(a), both methods improve performance on all four mathematics benchmarks. \method raises average accuracy from 31.38\% to 38.60\%, a gain of 7.23 percentage points, compared with 5.33 points for Vanilla OPD. Our method consistently yields larger improvements across the four benchmarks, with gains of 11.0, 5.2, 8.5, and 4.2 points, respectively. These results show that supervision elicited by code questions can also benefit mathematical reasoning.

\paragraph{Math-to-code transfer.}
We evaluate mathematics-trained students on LiveCodeBench, APPS-Intro, and APPS-Interview. Figure~\ref{fig:math_to_code}(b) shows that \method improves performance by 2.7, 4.8, and 8.7 percentage points, respectively, compared with 1.9, 3.8, and 6.7 points for Vanilla OPD. Average accuracy increases from 47.00\% to 52.40\% with our method, exceeding Vanilla OPD's 51.13\% and approaching the teacher's 52.70\%. The student also surpasses the teacher on APPS-Intro (70.1\% versus 68.7\%), while remaining below it on the other two benchmarks.

Together, these results demonstrate cross-domain gains in both evaluated directions, with our optimized synthetic questions outperforming real source-domain questions on every reported target benchmark. This finding is consistent with questions serving as carriers of distributional supervision whose benefits extend beyond the training domain. Although these experiments do not isolate the mechanism underlying transfer, they show that optimizing the supervision elicited from a particular student--teacher pair can improve performance beyond source-domain tasks.

\begin{figure*}[t]
  \centering
  \includegraphics[width=\textwidth]{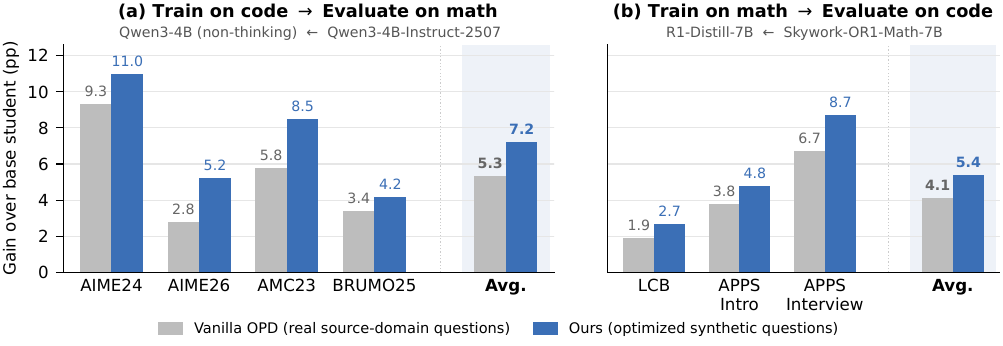}
  \caption{\textbf{Cross-domain transfer between math and code.} Gains over the initial student after source-domain OPD, without additional target-domain training. (a) Code-to-math transfer with Qwen3-4B (non-thinking) distilled from Qwen3-4B-Instruct-2507. (b) Math-to-code transfer with R1-Distill-7B distilled from Skywork-OR1-Math-7B. \method uses optimized synthetic source-domain questions, while Vanilla OPD uses real source-domain questions. Bars report gains in percentage points; shaded groups show unweighted averages across the displayed benchmarks. Both methods improve all seven evaluated pair--benchmark combinations, and \method consistently yields larger gains.}
  \label{fig:math_to_code}
\end{figure*}

\subsection{Training-token efficiency and checkpoint gains}
\label{sec:bank_efficiency}

We compare downstream accuracy against cumulative training tokens for
R1-Distill-1.5B on JustRL and R1-Distill-7B on Skywork
(Figure~\ref{fig:opd_token_efficiency}). The plot shows the Vanilla and \method
checkpoint trajectories, the Vanilla best-so-far reference, and the best
checkpoint for each method. \method reaches a best accuracy of 54.80\% on
R1-Distill-1.5B, compared with 52.27\% for Vanilla, a gain of 2.53 percentage
points. On R1-Distill-7B, \method reaches 63.80\% versus 61.70\% for Vanilla, a
gain of 2.10 points. The best-checkpoint token counts are similar for the
1.5B model, while \method reaches its best checkpoint with substantially fewer
training tokens for the 7B model.

These trajectories indicate that the accuracy gains are achieved without a
substantial increase in downstream training-token use, particularly for
R1-Distill-7B. They report downstream accuracy and token use only; they do not
establish lower end-to-end cost, which also depends on the upstream cost of
generating and scoring synthetic questions.

\subsection{Iterative Training}
\label{sec:iterative_training}

Our main framework optimizes synthetic questions for a fixed student--teacher pair before performing OPD. We further examine whether repeating this process as the student improves yields additional benefits. We conduct three rounds on the code-generation pair consisting of a Qwen3-4B student and a Qwen3-30B-A3B teacher. After each round, we update the student used for question optimization, construct a new synthetic question bank, and continue OPD while keeping the teacher fixed. We evaluate on LiveCodeBench, APPS-Intro, and APPS-Interview and report the highest observed average accuracy across the evaluated checkpoints within each round. This comparison provides an optimistic assessment of the benefits of additional rounds, rather than a validation-selected final result.

The first round raises average accuracy from 29.03\% to 40.53\%, an improvement of 11.50 percentage points. In contrast, the best observed averages in the second and third rounds reach only 40.82\% and 40.97\%, respectively. Thus, two additional rounds yield a cumulative gain of just 0.44 points over the first round, despite the additional question-generation and distillation effort. Even under this best-checkpoint comparison, iterative training provides only marginal improvements in the evaluated setting.

One possible explanation is that the first round already captures much of the readily accessible supervision for this student--teacher pair. As the student becomes better aligned with the teacher, the synthesizer may find it increasingly difficult to identify questions that expose substantial disagreement while remaining within regions where the teacher provides useful guidance. In terms of our learning-value objective, maintaining both disagreement and compatibility may therefore become harder across rounds. This interpretation remains a hypothesis, as downstream scores alone do not establish how these signals evolve during training. Nevertheless, the observed diminishing returns suggest that repeatedly adapting the question distribution is not sufficient to sustain the large gains achieved in the first round, supporting single-round optimization as a practical choice in this setting.

\subsection{Case study: where does the teacher provide useful signal?}
\label{sec:case_study}

To examine what the learning-value reward captures at the token level,
Figure~\ref{fig:case_study} contrasts a real geometry question from DAPO-17k-Math
with a short combinatorics question posed by our student-initialized
synthesizer, both scored with the same student--teacher pair. Each panel reports
the apparent difficulty of the question, the mean teacher--student discrepancy,
and the mean support-aware score $s_t$ (\S\ref{sec:reward}) over one student
rollout. Below these statistics, an excerpt of the rollout is shaded by $s_t$
and discrepancy, and two strips summarize both quantities over the full
response, with a box marking the excerpt. The student answers both questions
incorrectly.

\paragraph{Signal density.}
Despite its simpler surface form, the synthetic question elicits denser
supervision. Its rollout contains 1,241 tokens, compared with 2,002 for the
real question, while achieving approximately $1.5\times$ the mean $s_t$
(0.172 versus 0.117) and $1.7\times$ the mean discrepancy (0.513 versus 0.307).
Thus, the synthetic question produces a 38\% shorter trajectory with a higher
average support-aware score. Its cumulative score remains comparable to that of the real question
($\sum_t s_t = 214$ versus $234$), retaining approximately 91\% of the scored
signal with only 62\% of the response tokens.

\paragraph{Where the signal arises.}
The excerpts indicate where this difference comes from. In the geometry
rollout, the displayed segment consists of routine coordinate algebra: the
teacher largely agrees with the student token by token, and most positions
receive low scores. In the combinatorics rollout, high-scoring positions
cluster at the decisions that structure the solution---treating B and S as a
single entity and then counting the resulting items---including the step
where the student counts three items although there are four (U, E, the
B--S block, and C). In both rollouts, $s_t$ is almost perfectly
rank-correlated with the discrepancy (Spearman $\rho=0.994$ and $0.997$).
Thus, the combined score largely preserves the ordering of positions by
discrepancy in these examples. This rank correlation alone does not determine
the magnitude of the compatibility factor.

\begin{figure}[!t]
  \centering
  \includegraphics[width=0.45\textwidth]{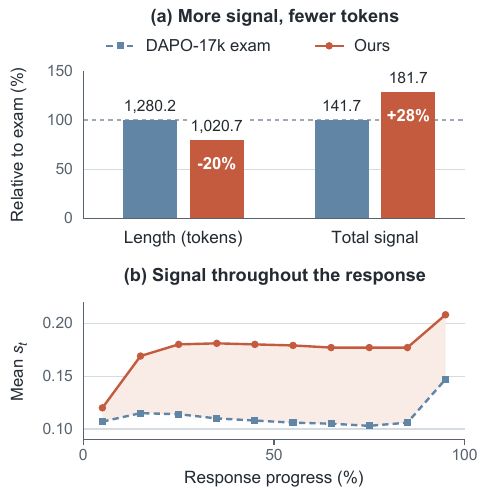}
  \caption{\textbf{Optimized questions elicit more signal with fewer tokens.} We compare student rollouts on 1,000 questions each from DAPO-17k-Math and our synthetic bank. (a) Mean response length and cumulative support-aware score $\sum_t s_t$, normalized to the real-question baseline (100\%); labels above bars show raw means. Synthetic questions yield 20\% shorter responses and 28\% more cumulative signal. (b) Mean token-level score $s_t$ across relative response-position bins. Synthetic questions maintain higher scores throughout the response, indicating that the signal-density advantage extends across the trajectory.}
  \label{fig:signal_efficiency}
\end{figure}

This example illustrates that surface complexity alone is an incomplete
indicator of a question's value for OPD: a concise, apparently straightforward
problem can elicit dense teacher--student disagreement with substantial
support-aware signal. The observed advantage concerns signal density; the
cumulative support-aware score is slightly lower for the synthetic question.
These observations provide a qualitative illustration of our
distribution-elicitation perspective, complementing the downstream performance
comparisons.

\paragraph{Aggregate signal analysis.} To complement the individual examples, we sample 1,000 questions each from DAPO-17k-Math and our synthetic question bank and analyze the support-aware signal along student rollouts. As shown in Figure~\ref{fig:signal_efficiency}(a), synthetic questions elicit shorter responses on average (1,020.7 versus 1,280.2 tokens, a 20\% reduction), yet yield a higher mean cumulative score $\sum_t s_t$ (181.7 versus 141.7, a 28\% increase). Thus, across these samples, the higher signal density translates into more total scored signal despite fewer response tokens. Figure~\ref{fig:signal_efficiency}(b) further shows that the mean $s_t$ remains higher for synthetic questions across all displayed relative-position bins, indicating that the advantage extends throughout the response rather than being confined to a narrow segment. Together with the concrete examples, these aggregate results support our distribution-elicitation perspective: optimized questions can expose denser and more abundant support-aware supervision without relying on longer student trajectories.

\subsection{Is Elicited Divergence a Learning Signal?}
Our reward rests on the premise that questions eliciting larger teacher--student divergence along student trajectories provide more useful supervision. On natural questions, the support-aware score closely tracks its divergence component (Spearman $\rho(\bar{s},\bar{\delta})=0.90$). We examine whether these signals predict actual learning gains through two experiments on the Qwen3-1.7B $\leftarrow$ Qwen3-4B-RL-Math pair.

\paragraph{Per-question learning value.}
For each of 1,500 held-out DAPO questions, we apply a single OPD update starting from the same base student and measure the resulting decrease in teacher--student KL on a fixed probe set. As shown in Figure~\ref{fig:divergence_learning_value} (left), mean divergence $\bar{\delta}$ strongly correlates with this learning value ($\rho=0.727$, 95\% CI $[0.695,0.755]$). The association remains positive after controlling for response length, student pass rate, and student entropy (partial $\rho=0.386$, 95\% CI $[0.329,0.443]$). The full score $\bar{s}$ exhibits similar associations, with raw and partial correlations of $0.750$ and $0.399$, respectively. Consistently, probe KL decrease rises monotonically across quintiles ranked by $\bar{s}$, with the highest quintile yielding $2.15\times$ the decrease of the lowest (middle).

\paragraph{Downstream training utility.}
We further train students on five score-ranked question banks under an identical token budget of 24.4M tokens, using three seeds per bank. Figure~\ref{fig:divergence_learning_value} (right) shows that higher-scoring banks generally yield better final accuracy across six benchmarks. Average accuracy rises from $41.06\%$ in R1 to $43.49\%$ in R4, while R5 reaches $43.11\%$, exceeding R1 by $2.05$ percentage points. Thus, the overall trend favors higher-scoring banks, although performance is not strictly monotonic. Additional high-versus-low divergence comparisons retain a positive advantage after balancing compatibility (Appendix~\ref{app:divergence_controls}). Together, these results support elicited divergence as a useful indicator of distillation value, with evidence from both single-update gains and downstream training.

Both experiments use natural DAPO-17k-Math questions, without synthesizer outputs, to evaluate whether the proposed signals predict learning value independently of synthesizer optimization. The one-step analysis measures changes on a fixed, disjoint probe set, while downstream comparisons use a common training-token budget. Appendix~\ref{app:signal_validation} provides the data splits, update and evaluation protocols, controlled designs, and statistical procedures.

\begin{figure*}[t]
    \centering
    \includegraphics[width=0.32\textwidth]{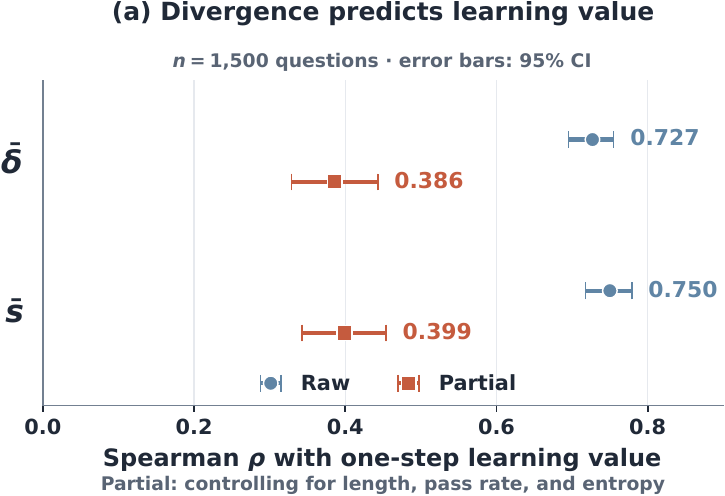}\hfill
    \includegraphics[width=0.32\textwidth]{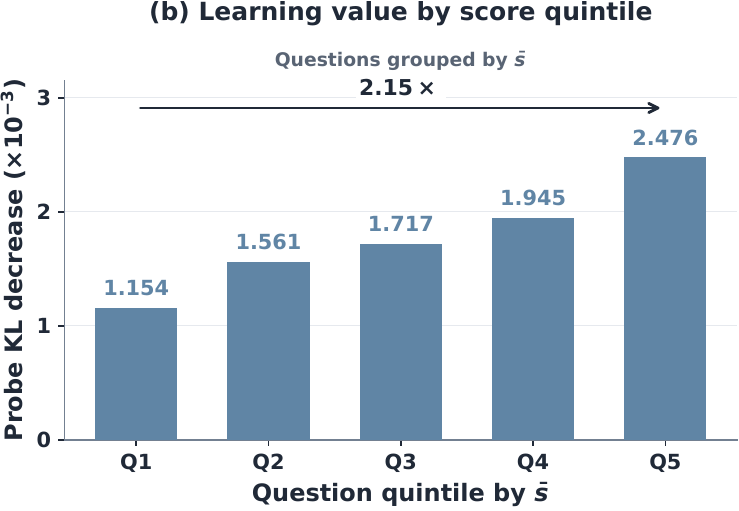}\hfill
    \includegraphics[width=0.32\textwidth]{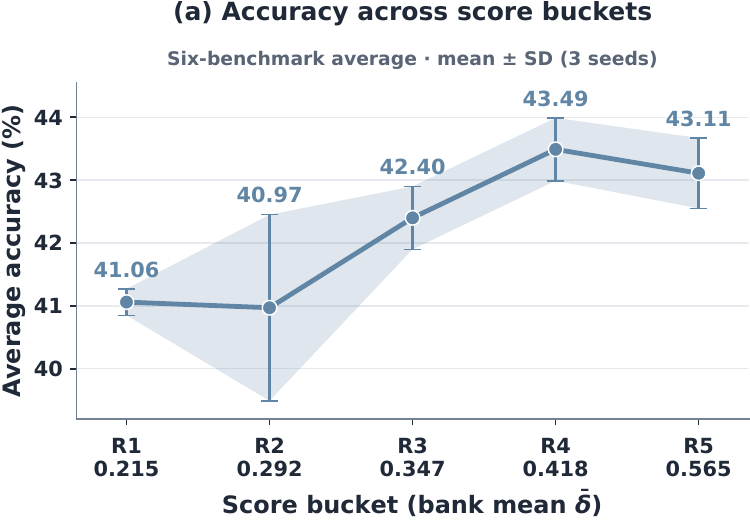}
    \caption{
    \textbf{Teacher--student divergence is positively associated with learning value.}
    \textbf{Left:} Raw and partial Spearman correlations of mean divergence
    $\bar{\delta}$ and mean score $\bar{s}$ with one-step learning value
    over 1,500 questions. Partial correlations control for response length,
    pass rate, and entropy; error bars indicate 95\% confidence intervals.
    \textbf{Middle:} Probe KL decrease increases across question quintiles
    ranked by $\bar{s}$, with the highest quintile yielding
    $2.15\times$ the decrease of the lowest.
    \textbf{Right:} Final average accuracy over six benchmarks for training
    on five score buckets, annotated with their mean $\bar{\delta}$.
    Points show means over three seeds; error bars and shading indicate
    $\pm$1 standard deviation.
    }
    \label{fig:divergence_learning_value}
\end{figure*}

\section{Related Work}
\label{sec:related_work}

\subsection{On-Policy Distillation and Selective Supervision}

On-policy distillation supervises student-generated trajectories using teacher predictions at the same visited states. GKD addresses the distribution mismatch of fixed-response distillation by training on student-generated sequences~\citep{gkd}, while MiniLLM develops an on-policy optimization approach based on reverse KL divergence~\citep{minillm}. Subsequent work improves the trajectories and teacher targets used for distillation. Speculative Knowledge Distillation interleaves student and teacher sampling to improve training sequences while accounting for the student's generation distribution~\citep{speculative-opd}. Best-of-N Teacher Rollout Selection augments student-context supervision with an auxiliary teacher trajectory selected according to correctness and student alignment~\citep{best-n-opd}. PromptKD instead tunes teacher-side prompts to produce student-friendly supervision~\citep{promptKD}. These approaches modify how supervision is produced or delivered, whereas our method learns the questions that elicit it.

Selective OPD further recognizes that response positions provide unequal learning value. TIP characterizes token importance through student entropy and teacher--student divergence~\citep{tip_opd}. Entropy-Aware OPD augments reverse KL with forward KL at high-teacher-entropy positions~\citep{entropy}, and FiRe-OPD combines trajectory filtering with soft token reweighting~\citep{filter-opd}. Token Teachability distinguishes locally learnable disagreement from mismatch outside the student's candidate support, using teacher--student discrepancy and teacher probability mass on the student's top-$K$ candidates~\citep{opd_token_teach}. In the related setting of on-policy self-distillation, matching supervision to student capacity has also been studied through joint token weighting and privileged-information adjustment~\citep{yang2026matchingsupervisionstudentslearning}. We build on the support-aware characterization of teachability, adapting it into a question-level reward with trajectory aggregation, length control, and batch-diversity control. Our optimization changes the synthetic prompt distribution while holding the downstream OPD procedure fixed.

\subsection{Synthetic Data Generation and Selection for Distillation}

Synthetic-data methods differ in the feedback used to construct training examples. Self-Instruct generates instruction--response pairs for supervised finetuning~\citep{self-instruct}, and WizardLM evolves instructions toward greater complexity~\citep{xu2024wizardlm}. R-Zero and Self-Questioning Language Models train task proposers through solver feedback to support self-improving reasoning~\citep{huang2025rzero,chen2025selfquestioning}. Self-Adapting Language Models learns to generate finetuning data and update directives using the adapted model's downstream performance~\citep{self-adaptive}. Particularly relevant to our setting, Prompting to Distill uses reinforced prompts and teacher--student discrepancy to synthesize data for text-classification distillation~\citep{prompt_to_distill}. Our work extends the broader idea of feedback-guided synthesis to prompt-only OPD data, where value is assessed through teacher distributions along student-generated reasoning trajectories.

Recent studies directly examine the role of training questions in OPD. Rethinking OPD identifies teacher--student compatibility as an important factor and proposes teacher-aligned prompt selection~\citep{rethink_opd1}. Its follow-up shows that a single query can expose broad state coverage and recover much of the gain from full-data training~\citep{rethink_opd2}. What Matters in OPD studies difficulty-based selection, finding that a small set of hard questions can match a much larger dataset in its evaluated settings and relating their utility to longer reasoning trajectories~\citep{opd-data-efficiency}. Beyond Prompt Count shows that prompt usefulness depends on the student--teacher pair and target capability, with targeted selection not consistently outperforming random sampling~\citep{wang2026promptcountdatashapes}. Data-Free OPD further demonstrates effective distillation using self-generated questions without external seed examples~\citep{data-free}.

\method learns a synthesizer to actively shape the question distribution for a target student--teacher pair. Its objective evaluates the support-aware supervision elicited by each question, rather than using surface complexity, answer correctness, or raw disagreement alone as the generation criterion. Unlike selection from an existing question bank or direct sampling from an unoptimized model, our pipeline trains the synthesizer using a frozen scoring pair and then constructs a curated prompt-only bank for OPD. Question generation and synthesizer training reuse the target student and teacher without external training questions or auxiliary models. A frozen embedding model is used only to remove semantically redundant questions during post-generation curation.

\section{Conclusion}

We introduced \method, a framework that treats synthetic questions as distribution elicitors for on-policy distillation. Rather than judging a question by its surface form, \method trains a student-initialized synthesizer against a frozen student--teacher pair to expose useful teacher supervision along student-generated trajectories. Its learning-value reward combines teacher--student disagreement with teacher probability mass on the student's candidate support, together with length and diversity controls, and the generated questions are filtered and deduplicated into a prompt-only bank for standard OPD. Synthesizer training reuses the target student and teacher without external training questions or auxiliary models; a frozen embedding model is used only for post-generation semantic deduplication. Across mathematical reasoning, code generation, and agentic tasks, \method improves average performance and teacher-gap recovery over unoptimized student-generated prompts and KL-filtered real questions, while matching or exceeding real-bank OPD in the evaluated settings. Signal analyses further show that optimized questions can elicit denser supervision from shorter student responses. These results support elicited distributional supervision as an objective for constructing synthetic distillation data.

Our learning-value reward is a surrogate estimated from a single student response per question, and the synthesizer is optimized once for a fixed student--teacher pair. Developing more effective adaptive question-generation strategies as the student improves, and combining learned question generation with token-level selection methods for OPD, are natural directions for future work.

\bibliography{example_paper}

@misc{yang2026matchingsupervisionstudentslearning,
      title={Matching Supervision to the Student's Learning Capacity: A Unified Framework for On-Policy Self-Distillation}, 
      author={Yongkang Yang and Zhezheng Hao and Hong Zhang and Yi Liu and Xiankun Lin and Wence Ji and Fanjunduo Wei and Jiarui Yu and Qiang Lin and Xiaoyun Liang and Hande Dong},
      year={2026},
      eprint={2608.08176},
      archivePrefix={arXiv},
      primaryClass={cs.AI},
      url={https://arxiv.org/abs/2608.08176}, 
}

@inproceedings{xu2024wizardlm,
  author       = {Can Xu and
                  Qingfeng Sun and
                  Kai Zheng and
                  Xiubo Geng and
                  Pu Zhao and
                  Jiazhan Feng and
                  Chongyang Tao and
                  Qingwei Lin and
                  Daxin Jiang},
  title        = {WizardLM: Empowering Large Pre-Trained Language Models to Follow Complex
                  Instructions},
  booktitle    = {The Twelfth International Conference on Learning Representations,
                  {ICLR} 2024, Vienna, Austria, May 7-11, 2024},
  publisher    = {OpenReview.net},
  year         = {2024},
  url          = {https://openreview.net/forum?id=CfXh93NDgH},
  bibsource    = {dblp computer science bibliography, https://dblp.org}
}

@inproceedings{huang2025rzero,
  author       = {Chengsong Huang and
                  Wenhao Yu and
                  Xiaoyang Wang and
                  Hongming Zhang and
                  Zongxia Li and
                  Ruosen Li and
                  Jiaxin Huang and
                  Haitao Mi and
                  Dong Yu},
  editor       = {Carl Vondrick and
                  Bharath Hariharan and
                  Colin Raffel and
                  Lerrel Pinto and
                  Diyi Yang and
                  Aleksandra Faust},
  title        = {R-Zero: Self-Evolving Reasoning {LLM} from Zero Data},
  booktitle    = {The Fourteenth International Conference on Learning Representations,
                  {ICLR} 2026, Rio de Janeiro, Brazil, April 23-27, 2026},
  publisher    = {proceedings.iclr.cc},
  year         = {2026},
  url          = {https://proceedings.iclr.cc/paper\_files/paper/2026/hash/d49b9aacebda61051166335af6fd3061-Abstract-Conference.html},
  bibsource    = {dblp computer science bibliography, https://dblp.org}
}

@article{chen2025selfquestioning,
  author       = {Lili Chen and
                  Mihir Prabhudesai and
                  Katerina Fragkiadaki and
                  Hao Liu and
                  Deepak Pathak},
  title        = {Self-Questioning Language Models},
  journal      = {CoRR},
  volume       = {abs/2508.03682},
  year         = {2025},
  url          = {https://doi.org/10.48550/arXiv.2508.03682},
  doi          = {10.48550/ARXIV.2508.03682},
  eprinttype   = {arXiv},
  eprint       = {2508.03682},
  bibsource    = {dblp computer science bibliography, https://dblp.org}
}

@article{rethink_opd1,
  author       = {Yaxuan Li and
                  Yuxin Zuo and
                  Bingxiang He and
                  Jinqian Zhang and
                  Chaojun Xiao and
                  Cheng Qian and
                  Tianyu Yu and
                  Huan{-}ang Gao and
                  Wenkai Yang and
                  Zhiyuan Liu and
                  Ning Ding},
  title        = {Rethinking On-Policy Distillation of Large Language Models: Phenomenology,
                  Mechanism, and Recipe},
  journal      = {CoRR},
  volume       = {abs/2604.13016},
  year         = {2026},
  url          = {https://doi.org/10.48550/arXiv.2604.13016},
  doi          = {10.48550/ARXIV.2604.13016},
  eprinttype   = {arXiv},
  eprint       = {2604.13016},
  bibsource    = {dblp computer science bibliography, https://dblp.org}
}

@article{rethink_opd2,
  author       = {Zixuan Fu and
                  Bingxiang He and
                  Yuxin Zuo and
                  Haohuan Huang and
                  Jinqian Zhang and
                  Ruhang Xiao and
                  Cheng Qian and
                  Qinyu Luo and
                  Huan{-}ang Gao and
                  Yudong Wang and
                  Zhiyuan Liu and
                  Ning Ding and
                  Chaojun Xiao},
  title        = {Rethinking On-Policy Distillation of Large Language Models {II:} One
                  Training Example},
  journal      = {CoRR},
  volume       = {abs/2609.04172},
  year         = {2026},
  url          = {https://doi.org/10.48550/arXiv.2609.04172},
  doi          = {10.48550/ARXIV.2609.04172},
  eprinttype   = {arXiv},
  eprint       = {2609.04172},
  bibsource    = {dblp computer science bibliography, https://dblp.org}
}

@article{data-free,
  author       = {Gengsheng Li and
                  Mao Zheng and
                  Mingyang Song and
                  Jie Sun and
                  Zeyuan Liu and
                  Ruiqi Liu and
                  Qiyong Zhong and
                  Haiyun Guo and
                  Junfeng Fang and
                  Jinqiao Wang},
  title        = {Data-free On-policy Distillation},
  journal      = {CoRR},
  volume       = {abs/2609.14193},
  year         = {2026},
  url          = {https://doi.org/10.48550/arXiv.2609.14193},
  doi          = {10.48550/ARXIV.2609.14193},
  eprinttype   = {arXiv},
  eprint       = {2609.14193},
  bibsource    = {dblp computer science bibliography, https://dblp.org}
}

@article{best-n-opd,
  author       = {Ke Zhang and
                  Yunjie Tian and
                  Dongdi Zhao and
                  Yijiang Li and
                  Yuanye Liu and
                  Vishal M. Patel and
                  Di Fu},
  title        = {On-Policy Distillation with Best-of-N Teacher Rollout Selection},
  journal      = {CoRR},
  volume       = {abs/2605.09725},
  year         = {2026},
  url          = {https://doi.org/10.48550/arXiv.2605.09725},
  doi          = {10.48550/ARXIV.2605.09725},
  eprinttype   = {arXiv},
  eprint       = {2605.09725},
  bibsource    = {dblp computer science bibliography, https://dblp.org}
}

@article{opd-data-efficiency,
  author       = {Zhinan Hou and
                  Jiaqi Zhang and
                  Xunliang Cai and
                  Keyou You},
  title        = {What Matters in On-Policy Distillation? {A} Perspective on Data Efficiency
                  and Data Selection},
  journal      = {CoRR},
  volume       = {abs/2609.05198},
  year         = {2026},
  url          = {https://doi.org/10.48550/arXiv.2609.05198},
  doi          = {10.48550/ARXIV.2609.05198},
  eprinttype   = {arXiv},
  eprint       = {2609.05198},
  bibsource    = {dblp computer science bibliography, https://dblp.org}
}

@misc{wang2026promptcountdatashapes,
      title={Beyond Prompt Count: How Data Shapes Transfer in On-Policy Distillation}, 
      author={Jiaxuan Wang and Jiafei Lyu and Yuchen Cai and Siye Wu and Pengyuan Wang and Jiashun Liu and Xiang Cheng and Kai Yang and Yangkun Chen and Saiyong Yang and Lan-Zhe Guo},
      year={2026},
      eprint={2609.37377},
      archivePrefix={arXiv},
      primaryClass={cs.AI},
      url={https://arxiv.org/abs/2609.37377}, 
}

@article{opd_token_teach,
  author       = {Yuanyi Wang and
                  Su Lu and
                  Yanggan Gu and
                  Pengkai Wang and
                  Yifan Yang and
                  Zhaoyi Yan and
                  Congkai Xie and
                  Jianmin Wu and
                  Hongxia Yang},
  title        = {Not All Disagreement Is Learnable: Token Teachability in On-Policy
                  Distillation},
  journal      = {CoRR},
  volume       = {abs/2605.26844},
  year         = {2026},
  url          = {https://doi.org/10.48550/arXiv.2605.26844},
  doi          = {10.48550/ARXIV.2605.26844},
  eprinttype   = {arXiv},
  eprint       = {2605.26844},
  bibsource    = {dblp computer science bibliography, https://dblp.org}
}

@inproceedings{dapo,
  author       = {Qiying Yu and
                  Zheng Zhang and
                  Ruofei Zhu and
                  Yufeng Yuan and
                  Xiaochen Zuo and
                  Yu Yue and
                  Weinan Dai and
                  Tiantian Fan and
                  Gaohong Liu and
                  Juncai Liu and
                  Lingjun Liu and
                  Xin Liu and
                  Haibin Lin and
                  Zhiqi Lin and
                  Bole Ma and
                  Guangming Sheng and
                  Yuxuan Tong and
                  Chi Zhang and
                  Mofan Zhang and
                  Ru Zhang and
                  Wang Zhang and
                  Hang Zhu and
                  Jinhua Zhu and
                  Jiaze Chen and
                  Jiangjie Chen and
                  Chengyi Wang and
                  Hongli Yu and
                  Yuxuan Song and
                  Xiangpeng Wei and
                  Hao Zhou and
                  Jingjing Liu and
                  Wei{-}Ying Ma and
                  Ya{-}Qin Zhang and
                  Lin Yan and
                  Yonghui Wu and
                  Mingxuan Wang},
  editor       = {Danielle Belgrave and
                  Cheng Zhang and
                  Laura N. Montoya and
                  Hsuan{-}Tien Lin and
                  Razvan Pascanu and
                  Piotr Koniusz and
                  Marzyeh Ghassemi and
                  Nancy Chen and
                  Iv{\'{a}}n Vladimir Meza Ru{\'{\i}}z and
                  Arturo Loaiza{-}Bonilla},
  title        = {{DAPO:} An Open-Source {LLM} Reinforcement Learning System at Scale},
  booktitle    = {Advances in Neural Information Processing Systems 38: Annual Conference
                  on Neural Information Processing Systems 2025, NeurIPS 2025, San Diego,
                  CA, USA, December 2-7, 2025 / Mexico City, Mexico, November 30 - December
                  5, 2025},
  year         = {2025},
  url          = {http://papers.nips.cc/paper\_files/paper/2025/hash/a4277440d50f1f15d2cb4c14f7e0c0d2-Abstract-Conference.html},
  bibsource    = {dblp computer science bibliography, https://dblp.org}
}

@misc{tip_opd,
      title={TIP: Token Importance in On-Policy Distillation}, 
      author={Yuanda Xu and Hejian Sang and Zhengze Zhou and Ran He and Zhipeng Wang and Alborz Geramifard},
      year={2026},
      eprint={2604.14084},
      archivePrefix={arXiv},
      primaryClass={cs.LG},
      url={https://arxiv.org/abs/2604.14084}, 
}

@article{code-dataset,
  author       = {Wasi Uddin Ahmad and
                  Sean Narenthiran and
                  Somshubra Majumdar and
                  Aleksander Ficek and
                  Siddhartha Jain and
                  Jocelyn Huang and
                  Vahid Noroozi and
                  Boris Ginsburg},
  title        = {OpenCodeReasoning: Advancing Data Distillation for Competitive Coding},
  journal      = {CoRR},
  volume       = {abs/2504.01943},
  year         = {2025},
  url          = {https://doi.org/10.48550/arXiv.2504.01943},
  doi          = {10.48550/ARXIV.2504.01943},
  eprinttype   = {arXiv},
  eprint       = {2504.01943},
  bibsource    = {dblp computer science bibliography, https://dblp.org}
}

@inproceedings{prompt_to_distill,
  author       = {Xinyin Ma and
                  Xinchao Wang and
                  Gongfan Fang and
                  Yongliang Shen and
                  Weiming Lu},
  editor       = {Luc De Raedt},
  title        = {Prompting to Distill: Boosting Data-Free Knowledge Distillation via
                  Reinforced Prompt},
  booktitle    = {Proceedings of the Thirty-First International Joint Conference on
                  Artificial Intelligence, {IJCAI} 2022, Vienna, Austria, 23-29 July
                  2022},
  pages        = {4296--4302},
  publisher    = {ijcai.org},
  year         = {2022},
  url          = {https://doi.org/10.24963/ijcai.2022/596},
  doi          = {10.24963/IJCAI.2022/596},
  bibsource    = {dblp computer science bibliography, https://dblp.org}
}

@inproceedings{gkd,
  author       = {Rishabh Agarwal and
                  Nino Vieillard and
                  Yongchao Zhou and
                  Piotr Stanczyk and
                  Sabela Ramos Garea and
                  Matthieu Geist and
                  Olivier Bachem},
  title        = {On-Policy Distillation of Language Models: Learning from Self-Generated
                  Mistakes},
  booktitle    = {The Twelfth International Conference on Learning Representations,
                  {ICLR} 2024, Vienna, Austria, May 7-11, 2024},
  publisher    = {OpenReview.net},
  year         = {2024},
  url          = {https://openreview.net/forum?id=3zKtaqxLhW},
  bibsource    = {dblp computer science bibliography, https://dblp.org}
}

@inproceedings{minillm,
  author       = {Yuxian Gu and
                  Li Dong and
                  Furu Wei and
                  Minlie Huang},
  title        = {MiniLLM: Knowledge Distillation of Large Language Models},
  booktitle    = {The Twelfth International Conference on Learning Representations,
                  {ICLR} 2024, Vienna, Austria, May 7-11, 2024},
  publisher    = {OpenReview.net},
  year         = {2024},
  url          = {https://openreview.net/forum?id=5h0qf7IBZZ},
  bibsource    = {dblp computer science bibliography, https://dblp.org}
}

@article{filter-opd,
  author       = {Yuying Li and
                  Leqi Zheng and
                  Yongzi Yu and
                  Wenrui Zhou and
                  Xuchang Zhong and
                  Xing Hu and
                  Jing Jin and
                  Hangjie Yuan and
                  Tao Feng},
  title        = {Filter, Then Reweight: Rethinking Optimization Granularity in On-Policy
                  Distillation},
  journal      = {CoRR},
  volume       = {abs/2606.02684},
  year         = {2026},
  url          = {https://doi.org/10.48550/arXiv.2606.02684},
  doi          = {10.48550/ARXIV.2606.02684},
  eprinttype   = {arXiv},
  eprint       = {2606.02684},
  bibsource    = {dblp computer science bibliography, https://dblp.org}
}

@article{entropy,
  author       = {Woogyeol Jin and
                  Taywon Min and
                  Yongjin Yang and
                  Swanand Ravindra Kadhe and
                  Yi Zhou and
                  Dennis Wei and
                  Nathalie Baracaldo and
                  Kimin Lee},
  title        = {Entropy-Aware On-Policy Distillation of Language Models},
  journal      = {CoRR},
  volume       = {abs/2603.07079},
  year         = {2026},
  url          = {https://doi.org/10.48550/arXiv.2603.07079},
  doi          = {10.48550/ARXIV.2603.07079},
  eprinttype   = {arXiv},
  eprint       = {2603.07079},
  bibsource    = {dblp computer science bibliography, https://dblp.org}
}

@inproceedings{promptKD,
  author       = {Gyeongman Kim and
                  Doohyuk Jang and
                  Eunho Yang},
  editor       = {Yaser Al{-}Onaizan and
                  Mohit Bansal and
                  Yun{-}Nung Chen},
  title        = {PromptKD: Distilling Student-Friendly Knowledge for Generative Language
                  Models via Prompt Tuning},
  booktitle    = {Findings of the Association for Computational Linguistics: {EMNLP}
                  2024, Miami, Florida, USA, November 12-16, 2024},
  series       = {Findings of {ACL}},
  volume       = {{EMNLP} 2024},
  pages        = {6266--6282},
  publisher    = {Association for Computational Linguistics},
  year         = {2024},
  url          = {https://doi.org/10.18653/v1/2024.findings-emnlp.364},
  doi          = {10.18653/V1/2024.FINDINGS-EMNLP.364},
  bibsource    = {dblp computer science bibliography, https://dblp.org}
}

@inproceedings{speculative-opd,
  author       = {Wenda Xu and
                  Rujun Han and
                  Zifeng Wang and
                  Long T. Le and
                  Dhruv Madeka and
                  Lei Li and
                  William Yang Wang and
                  Rishabh Agarwal and
                  Chen{-}Yu Lee and
                  Tomas Pfister},
  title        = {Speculative Knowledge Distillation: Bridging the Teacher-Student Gap
                  Through Interleaved Sampling},
  booktitle    = {The Thirteenth International Conference on Learning Representations,
                  {ICLR} 2025, Singapore, April 24-28, 2025},
  publisher    = {OpenReview.net},
  year         = {2025},
  url          = {https://openreview.net/forum?id=EgJhwYR2tB},
  bibsource    = {dblp computer science bibliography, https://dblp.org}
}

@article{self-adaptive,
  author       = {Adam Zweiger and
                  Jyothish Pari and
                  Han Guo and
                  Ekin Aky{\"{u}}rek and
                  Yoon Kim and
                  Pulkit Agrawal},
  title        = {Self-Adapting Language Models},
  journal      = {CoRR},
  volume       = {abs/2506.10943},
  year         = {2025},
  url          = {https://doi.org/10.48550/arXiv.2506.10943},
  doi          = {10.48550/ARXIV.2506.10943},
  eprinttype   = {arXiv},
  eprint       = {2506.10943},
  bibsource    = {dblp computer science bibliography, https://dblp.org}
}

@inproceedings{self-instruct,
  author       = {Yizhong Wang and
                  Yeganeh Kordi and
                  Swaroop Mishra and
                  Alisa Liu and
                  Noah A. Smith and
                  Daniel Khashabi and
                  Hannaneh Hajishirzi},
  editor       = {Anna Rogers and
                  Jordan L. Boyd{-}Graber and
                  Naoaki Okazaki},
  title        = {Self-Instruct: Aligning Language Models with Self-Generated Instructions},
  booktitle    = {Proceedings of the 61st Annual Meeting of the Association for Computational
                  Linguistics (Volume 1: Long Papers), {ACL} 2023, Toronto, Canada,
                  July 9-14, 2023},
  pages        = {13484--13508},
  publisher    = {Association for Computational Linguistics},
  year         = {2023},
  url          = {https://doi.org/10.18653/v1/2023.acl-long.754},
  doi          = {10.18653/V1/2023.ACL-LONG.754},
  bibsource    = {dblp computer science bibliography, https://dblp.org}
}

@article{grpo,
  author       = {Zhihong Shao and
                  Peiyi Wang and
                  Qihao Zhu and
                  Runxin Xu and
                  Junxiao Song and
                  Mingchuan Zhang and
                  Y. K. Li and
                  Y. Wu and
                  Daya Guo},
  title        = {DeepSeekMath: Pushing the Limits of Mathematical Reasoning in Open
                  Language Models},
  journal      = {CoRR},
  volume       = {abs/2402.03300},
  year         = {2024},
  url          = {https://doi.org/10.48550/arXiv.2402.03300},
  doi          = {10.48550/ARXIV.2402.03300},
  eprinttype   = {arXiv},
  eprint       = {2402.03300},
  bibsource    = {dblp computer science bibliography, https://dblp.org}
}

@article{qwen3-embedding,
  author       = {Yanzhao Zhang and
                  Mingxin Li and
                  Dingkun Long and
                  Xin Zhang and
                  Huan Lin and
                  Baosong Yang and
                  Pengjun Xie and
                  An Yang and
                  Dayiheng Liu and
                  Junyang Lin and
                  Fei Huang and
                  Jingren Zhou},
  title        = {Qwen3 Embedding: Advancing Text Embedding and Reranking Through Foundation
                  Models},
  journal      = {CoRR},
  volume       = {abs/2506.05176},
  year         = {2025},
  url          = {https://doi.org/10.48550/arXiv.2506.05176},
  doi          = {10.48550/ARXIV.2506.05176},
  eprinttype   = {arXiv},
  eprint       = {2506.05176},
  bibsource    = {dblp computer science bibliography, https://dblp.org}
}

@inproceedings{math500,
  author       = {Dan Hendrycks and
                  Collin Burns and
                  Saurav Kadavath and
                  Akul Arora and
                  Steven Basart and
                  Eric Tang and
                  Dawn Song and
                  Jacob Steinhardt},
  editor       = {Joaquin Vanschoren and
                  Sai{-}Kit Yeung},
  title        = {Measuring Mathematical Problem Solving With the {MATH} Dataset},
  booktitle    = {Proceedings of the Neural Information Processing Systems Track on
                  Datasets and Benchmarks 1, NeurIPS Datasets and Benchmarks 2021, December
                  2021, virtual},
  year         = {2021},
  url          = {https://datasets-benchmarks-proceedings.neurips.cc/paper/2021/hash/be83ab3ecd0db773eb2dc1b0a17836a1-Abstract-round2.html},
  bibsource    = {dblp computer science bibliography, https://dblp.org}
}

@article{qwen3,
  author       = {Qwen Team},
  title        = {Qwen3 Technical Report},
  journal      = {CoRR},
  volume       = {abs/2505.09388},
  year         = {2025},
  url          = {https://doi.org/10.48550/arXiv.2505.09388},
  doi          = {10.48550/ARXIV.2505.09388},
  eprinttype   = {arXiv},
  eprint       = {2505.09388},
  bibsource    = {dblp computer science bibliography, https://dblp.org}
}

@article{qwen2.5-coder,
  author       = {Binyuan Hui and
                  Jian Yang and
                  Zeyu Cui and
                  Jiaxi Yang and
                  Dayiheng Liu and
                  Lei Zhang and
                  Tianyu Liu and
                  Jiajun Zhang and
                  Bowen Yu and
                  Kai Dang and
                  An Yang and
                  Rui Men and
                  Fei Huang and
                  Xingzhang Ren and
                  Xuancheng Ren and
                  Jingren Zhou and
                  Junyang Lin},
  title        = {Qwen2.5-Coder Technical Report},
  journal      = {CoRR},
  volume       = {abs/2409.12186},
  year         = {2024},
  url          = {https://doi.org/10.48550/arXiv.2409.12186},
  doi          = {10.48550/ARXIV.2409.12186},
  eprinttype   = {arXiv},
  eprint       = {2409.12186},
  bibsource    = {dblp computer science bibliography, https://dblp.org}
}

@article{deepseek-r1,
  author       = {Daya Guo and
                  Dejian Yang and
                  Haowei Zhang and
                  Junxiao Song and
                  Peiyi Wang and
                  Qihao Zhu and
                  Runxin Xu and
                  Ruoyu Zhang and
                  Shirong Ma and
                  Xiao Bi and
                  Xiaokang Zhang and
                  Xingkai Yu and
                  Yu Wu and
                  Z. F. Wu and
                  Zhibin Gou and
                  Zhihong Shao and
                  Zhuoshu Li and
                  Ziyi Gao and
                  Aixin Liu and
                  Bing Xue and
                  Bingxuan Wang and
                  Bochao Wu and
                  Bei Feng and
                  Chengda Lu and
                  Chenggang Zhao and
                  Chengqi Deng and
                  Chong Ruan and
                  Damai Dai and
                  Deli Chen and
                  Dongjie Ji and
                  Erhang Li and
                  Fangyun Lin and
                  Fucong Dai and
                  Fuli Luo and
                  Guangbo Hao and
                  Guanting Chen and
                  Guowei Li and
                  Hao Zhang and
                  Hanwei Xu and
                  Honghui Ding and
                  Huazuo Gao and
                  Hui Qu and
                  Hui Li and
                  Jianzhong Guo and
                  Jiashi Li and
                  Jingchang Chen and
                  Jingyang Yuan and
                  Jinhao Tu and
                  Junjie Qiu and
                  Junlong Li and
                  J. L. Cai and
                  Jiaqi Ni and
                  Jian Liang and
                  Jin Chen and
                  Kai Dong and
                  Kai Hu and
                  Kaichao You and
                  Kaige Gao and
                  Kang Guan and
                  Kexin Huang and
                  Kuai Yu and
                  Lean Wang and
                  Lecong Zhang and
                  Liang Zhao and
                  Litong Wang and
                  Liyue Zhang and
                  Lei Xu and
                  Leyi Xia and
                  Mingchuan Zhang and
                  Minghua Zhang and
                  Minghui Tang and
                  Mingxu Zhou and
                  Meng Li and
                  Miaojun Wang and
                  Mingming Li and
                  Ning Tian and
                  Panpan Huang and
                  Peng Zhang and
                  Qiancheng Wang and
                  Qinyu Chen and
                  Qiushi Du and
                  Ruiqi Ge and
                  Ruisong Zhang and
                  Ruizhe Pan and
                  Runji Wang and
                  R. J. Chen and
                  R. L. Jin and
                  Ruyi Chen and
                  Shanghao Lu and
                  Shangyan Zhou and
                  Shanhuang Chen and
                  Shengfeng Ye and
                  Shiyu Wang and
                  Shuiping Yu and
                  Shunfeng Zhou and
                  Shuting Pan and
                  S. S. Li and
                  Shuang Zhou and
                  Shaoqing Wu and
                  Tao Yun and
                  Tian Pei and
                  Tianyu Sun and
                  Tao Wang and
                  Wangding Zeng and
                  Wen Liu and
                  Wenfeng Liang and
                  Wenjun Gao and
                  Wenqin Yu and
                  Wentao Zhang and
                  W. L. Xiao and
                  Wei An and
                  Xiaodong Liu and
                  Xiaohan Wang and
                  Xiaokang Chen and
                  Xiaotao Nie and
                  Xin Cheng and
                  Xin Liu and
                  Xin Xie and
                  Xingchao Liu and
                  Xinyu Yang and
                  Xinyuan Li and
                  Xuecheng Su and
                  Xuheng Lin and
                  X. Q. Li and
                  Xiangyue Jin and
                  Xiaojin Shen and
                  Xiaosha Chen and
                  Xiaowen Sun and
                  Xiaoxiang Wang and
                  Xinnan Song and
                  Xinyi Zhou and
                  Xianzu Wang and
                  Xinxia Shan and
                  Y. K. Li and
                  Y. Q. Wang and
                  Y. X. Wei and
                  Yang Zhang and
                  Yanhong Xu and
                  Yao Li and
                  Yao Zhao and
                  Yaofeng Sun and
                  Yaohui Wang and
                  Yi Yu and
                  Yichao Zhang and
                  Yifan Shi and
                  Yiliang Xiong and
                  Ying He and
                  Yishi Piao and
                  Yisong Wang and
                  Yixuan Tan and
                  Yiyang Ma and
                  Yiyuan Liu and
                  Yongqiang Guo and
                  Yuan Ou and
                  Yuduan Wang and
                  Yue Gong and
                  Yuheng Zou and
                  Yujia He and
                  Yunfan Xiong and
                  Yuxiang Luo and
                  Yuxiang You and
                  Yuxuan Liu and
                  Yuyang Zhou and
                  Y. X. Zhu and
                  Yanping Huang and
                  Yaohui Li and
                  Yi Zheng and
                  Yuchen Zhu and
                  Yunxian Ma and
                  Ying Tang and
                  Yukun Zha and
                  Yuting Yan and
                  Z. Z. Ren and
                  Zehui Ren and
                  Zhangli Sha and
                  Zhe Fu and
                  Zhean Xu and
                  Zhenda Xie and
                  Zhengyan Zhang and
                  Zhewen Hao and
                  Zhicheng Ma and
                  Zhigang Yan and
                  Zhiyu Wu and
                  Zihui Gu and
                  Zijia Zhu and
                  Zijun Liu and
                  Zilin Li and
                  Ziwei Xie and
                  Ziyang Song and
                  Zizheng Pan and
                  Zhen Huang and
                  Zhipeng Xu and
                  Zhongyu Zhang and
                  Zhen Zhang},
  title        = {DeepSeek-R1 incentivizes reasoning in LLMs through reinforcement learning},
  journal      = {Nat.},
  volume       = {645},
  number       = {8081},
  pages        = {633--638},
  year         = {2025},
  url          = {https://doi.org/10.1038/s41586-025-09422-z},
  doi          = {10.1038/S41586-025-09422-Z},
  bibsource    = {dblp computer science bibliography, https://dblp.org}
}

@article{skywork,
  author       = {Jujie He and
                  Jiacai Liu and
                  Chris Yuhao Liu and
                  Rui Yan and
                  Chaojie Wang and
                  Peng Cheng and
                  Xiaoyu Zhang and
                  Fuxiang Zhang and
                  Jiacheng Xu and
                  Wei Shen and
                  Siyuan Li and
                  Liang Zeng and
                  Tianwen Wei and
                  Cheng Cheng and
                  Bo An and
                  Yang Liu and
                  Yahui Zhou},
  title        = {Skywork Open Reasoner 1 Technical Report},
  journal      = {CoRR},
  volume       = {abs/2505.22312},
  year         = {2025},
  url          = {https://doi.org/10.48550/arXiv.2505.22312},
  doi          = {10.48550/ARXIV.2505.22312},
  eprinttype   = {arXiv},
  eprint       = {2505.22312},
  bibsource    = {dblp computer science bibliography, https://dblp.org}
}

@article{qwen3-4b-non-thinking-rl-math,
  author       = {Wenkai Yang and
                  Weijie Liu and
                  Ruobing Xie and
                  Kai Yang and
                  Saiyong Yang and
                  Yankai Lin},
  title        = {Learning beyond Teacher: Generalized On-Policy Distillation with Reward
                  Extrapolation},
  journal      = {CoRR},
  volume       = {abs/2602.12125},
  year         = {2026},
  url          = {https://doi.org/10.48550/arXiv.2602.12125},
  doi          = {10.48550/ARXIV.2602.12125},
  eprinttype   = {arXiv},
  eprint       = {2602.12125},
  bibsource    = {dblp computer science bibliography, https://dblp.org}
}

@article{tau2,
  author       = {Victor Barr{\`{e}}s and
                  Honghua Dong and
                  Soham Ray and
                  Xujie Si and
                  Karthik Narasimhan},
  title        = {{\(\tau\)}\({}^{\mbox{2}}\)-Bench: Evaluating Conversational Agents
                  in a Dual-Control Environment},
  journal      = {CoRR},
  volume       = {abs/2506.07982},
  year         = {2025},
  url          = {https://doi.org/10.48550/arXiv.2506.07982},
  doi          = {10.48550/ARXIV.2506.07982},
  eprinttype   = {arXiv},
  eprint       = {2506.07982},
  bibsource    = {dblp computer science bibliography, https://dblp.org}
}

@misc{aime25,
      title={American Invitational Mathematics Examination (AIME) 2025}, 
      author={Zhang, Yifan and Math-AI, Team},
      year={2025},
}

@article{codecontests,
  author       = {Yujia Li and
                  David H. Choi and
                  Junyoung Chung and
                  Nate Kushman and
                  Julian Schrittwieser and
                  R{\'{e}}mi Leblond and
                  Tom Eccles and
                  James Keeling and
                  Felix Gimeno and
                  Agustin Dal Lago and
                  Thomas Hubert and
                  Peter Choy and
                  Cyprien de Masson d'Autume and
                  Igor Babuschkin and
                  Xinyun Chen and
                  Po{-}Sen Huang and
                  Johannes Welbl and
                  Sven Gowal and
                  Alexey Cherepanov and
                  James Molloy and
                  Daniel J. Mankowitz and
                  Esme Sutherland Robson and
                  Pushmeet Kohli and
                  Nando de Freitas and
                  Koray Kavukcuoglu and
                  Oriol Vinyals},
  title        = {Competition-Level Code Generation with AlphaCode},
  journal      = {CoRR},
  volume       = {abs/2203.07814},
  year         = {2022},
  url          = {https://doi.org/10.48550/arXiv.2203.07814},
  doi          = {10.48550/ARXIV.2203.07814},
  eprinttype   = {arXiv},
  eprint       = {2203.07814},
  bibsource    = {dblp computer science bibliography, https://dblp.org}
}

@inproceedings{apps,
  author       = {Dan Hendrycks and
                  Steven Basart and
                  Saurav Kadavath and
                  Mantas Mazeika and
                  Akul Arora and
                  Ethan Guo and
                  Collin Burns and
                  Samir Puranik and
                  Horace He and
                  Dawn Song and
                  Jacob Steinhardt},
  editor       = {Joaquin Vanschoren and
                  Sai{-}Kit Yeung},
  title        = {Measuring Coding Challenge Competence With {APPS}},
  booktitle    = {Proceedings of the Neural Information Processing Systems Track on
                  Datasets and Benchmarks 1, NeurIPS Datasets and Benchmarks 2021, December
                  2021, virtual},
  year         = {2021},
  url          = {https://datasets-benchmarks-proceedings.neurips.cc/paper/2021/hash/c24cd76e1ce41366a4bbe8a49b02a028-Abstract-round2.html},
  bibsource    = {dblp computer science bibliography, https://dblp.org}
}

@inproceedings{mbpp,
  author       = {Jiawei Liu and
                  Chunqiu Steven Xia and
                  Yuyao Wang and
                  Lingming Zhang},
  editor       = {Alice Oh and
                  Tristan Naumann and
                  Amir Globerson and
                  Kate Saenko and
                  Moritz Hardt and
                  Sergey Levine},
  title        = {Is Your Code Generated by ChatGPT Really Correct? Rigorous Evaluation
                  of Large Language Models for Code Generation},
  booktitle    = {Advances in Neural Information Processing Systems 36: Annual Conference
                  on Neural Information Processing Systems 2023, NeurIPS 2023, New Orleans,
                  LA, USA, December 10 - 16, 2023},
  year         = {2023},
  url          = {http://papers.nips.cc/paper\_files/paper/2023/hash/43e9d647ccd3e4b7b5baab53f0368686-Abstract-Conference.html},
  bibsource    = {dblp computer science bibliography, https://dblp.org}
}

@inproceedings{lcb,
  author       = {Naman Jain and
                  King Han and
                  Alex Gu and
                  Wen{-}Ding Li and
                  Fanjia Yan and
                  Tianjun Zhang and
                  Sida Wang and
                  Armando Solar{-}Lezama and
                  Koushik Sen and
                  Ion Stoica},
  title        = {LiveCodeBench: Holistic and Contamination Free Evaluation of Large
                  Language Models for Code},
  booktitle    = {The Thirteenth International Conference on Learning Representations,
                  {ICLR} 2025, Singapore, April 24-28, 2025},
  publisher    = {OpenReview.net},
  year         = {2025},
  url          = {https://openreview.net/forum?id=chfJJYC3iL},
  bibsource    = {dblp computer science bibliography, https://dblp.org}
}

@article{brumo2025_aime26,
  author       = {Jasper Dekoninck and
                  Nikola Jovanovic and
                  Tim Gehrunger and
                  K{\'{a}}ri R{\"{o}}gnvaldsson and
                  Ivo Petrov and
                  Chenhao Sun and
                  Martin T. Vechev},
  title        = {Beyond Benchmarks: MathArena as an Evaluation Platform for Mathematics
                  with LLMs},
  journal      = {CoRR},
  volume       = {abs/2605.00674},
  year         = {2026},
  url          = {https://doi.org/10.48550/arXiv.2605.00674},
  doi          = {10.48550/ARXIV.2605.00674},
  eprinttype   = {arXiv},
  eprint       = {2605.00674},
  bibsource    = {dblp computer science bibliography, https://dblp.org}
}

@misc{aime24,
  author = {{Mathematical Association of America}},
  title  = {{American Invitational Mathematics Examination (AIME) 2024}},
  year   = {2024},
  url    = {https://maa.org/maa-invitational-competitions/}
}

@misc{amc23,
  author = {{Mathematical Association of America}},
  title  = {{American Mathematics Competitions (AMC) 2023}},
  year   = {2023},
  url    = {https://maa.org/student-programs/amc/}
}

@inproceedings{math500_2,
  author       = {Hunter Lightman and
                  Vineet Kosaraju and
                  Yuri Burda and
                  Harrison Edwards and
                  Bowen Baker and
                  Teddy Lee and
                  Jan Leike and
                  John Schulman and
                  Ilya Sutskever and
                  Karl Cobbe},
  title        = {Let's Verify Step by Step},
  booktitle    = {The Twelfth International Conference on Learning Representations,
                  {ICLR} 2024, Vienna, Austria, May 7-11, 2024},
  publisher    = {OpenReview.net},
  year         = {2024},
  url          = {https://openreview.net/forum?id=v8L0pN6EOi},
  bibsource    = {dblp computer science bibliography, https://dblp.org}
}

@article{mbpp+,
  author       = {Jacob Austin and
                  Augustus Odena and
                  Maxwell I. Nye and
                  Maarten Bosma and
                  Henryk Michalewski and
                  David Dohan and
                  Ellen Jiang and
                  Carrie J. Cai and
                  Michael Terry and
                  Quoc V. Le and
                  Charles Sutton},
  title        = {Program Synthesis with Large Language Models},
  journal      = {CoRR},
  volume       = {abs/2108.07732},
  year         = {2021},
  url          = {https://doi.org/10.48550/arXiv.2108.07732},
  doi          = {10.48550/ARXIV.2108.07732},
  eprinttype   = {arXiv},
  eprint       = {2108.07732},
  bibsource    = {dblp computer science bibliography, https://dblp.org}
}

@article{ppo,
  author       = {John Schulman and
                  Filip Wolski and
                  Prafulla Dhariwal and
                  Alec Radford and
                  Oleg Klimov},
  title        = {Proximal Policy Optimization Algorithms},
  journal      = {CoRR},
  volume       = {abs/1707.06347},
  year         = {2017},
  url          = {https://doi.org/10.48550/arXiv.1707.06347},
  doi          = {10.48550/ARXIV.1707.06347},
  eprinttype   = {arXiv},
  eprint       = {1707.06347},
  bibsource    = {dblp computer science bibliography, https://dblp.org}
}

@inproceedings{vllm,
  author       = {Woosuk Kwon and
                  Zhuohan Li and
                  Siyuan Zhuang and
                  Ying Sheng and
                  Lianmin Zheng and
                  Cody Hao Yu and
                  Joseph Gonzalez and
                  Hao Zhang and
                  Ion Stoica},
  editor       = {Jason Flinn and
                  Margo I. Seltzer and
                  Peter Druschel and
                  Antoine Kaufmann and
                  Jonathan Mace},
  title        = {Efficient Memory Management for Large Language Model Serving with
                  PagedAttention},
  booktitle    = {Proceedings of the 29th Symposium on Operating Systems Principles,
                  {SOSP} 2023, Koblenz, Germany, October 23-26, 2023},
  pages        = {611--626},
  publisher    = {{ACM}},
  year         = {2023},
  url          = {https://doi.org/10.1145/3600006.3613165},
  doi          = {10.1145/3600006.3613165},
  bibsource    = {dblp computer science bibliography, https://dblp.org}
}

@inproceedings{verl,
  author       = {Guangming Sheng and
                  Chi Zhang and
                  Zilingfeng Ye and
                  Xibin Wu and
                  Wang Zhang and
                  Ru Zhang and
                  Yanghua Peng and
                  Haibin Lin and
                  Chuan Wu},
  title        = {HybridFlow: {A} Flexible and Efficient {RLHF} Framework},
  booktitle    = {Proceedings of the Twentieth European Conference on Computer Systems,
                  EuroSys 2025, Rotterdam, The Netherlands, 30 March 2025 - 3 April
                  2025},
  pages        = {1279--1297},
  publisher    = {{ACM}},
  year         = {2025},
  url          = {https://doi.org/10.1145/3689031.3696075},
  doi          = {10.1145/3689031.3696075},
  bibsource    = {dblp computer science bibliography, https://dblp.org}
}

@article{sympy,
  author       = {Aaron Meurer and
                  Christopher P. Smith and
                  Mateusz Paprocki and
                  Ondrej Cert{\'{\i}}k and
                  Sergey B. Kirpichev and
                  Matthew Rocklin and
                  Amit Kumar and
                  Sergiu Ivanov and
                  Jason Keith Moore and
                  Sartaj Singh and
                  Thilina Rathnayake and
                  Sean Vig and
                  Brian E. Granger and
                  Richard P. Muller and
                  Francesco Bonazzi and
                  Harsh Gupta and
                  Shivam Vats and
                  Fredrik Johansson and
                  Fabian Pedregosa and
                  Matthew J. Curry and
                  Andy R. Terrel and
                  Step{\'{a}}n Roucka and
                  Ashutosh Saboo and
                  Isuru Fernando and
                  Sumith Kulal and
                  Robert Cimrman and
                  Anthony M. Scopatz},
  title        = {SymPy: symbolic computing in Python},
  journal      = {PeerJ Comput. Sci.},
  volume       = {3},
  pages        = {e103},
  year         = {2017},
  url          = {https://doi.org/10.7717/peerj-cs.103},
  doi          = {10.7717/PEERJ-CS.103},
  bibsource    = {dblp computer science bibliography, https://dblp.org}
}

@article{justrl,
  author       = {Bingxiang He and
                  Zekai Qu and
                  Zeyuan Liu and
                  Yinghao Chen and
                  Yuxin Zuo and
                  Cheng Qian and
                  Kaiyan Zhang and
                  Weize Chen and
                  Chaojun Xiao and
                  Ganqu Cui and
                  Ning Ding and
                  Zhiyuan Liu},
  title        = {JustRL: Scaling a 1.5B {LLM} with a Simple {RL} Recipe},
  journal      = {CoRR},
  volume       = {abs/2512.16649},
  year         = {2025},
  url          = {https://doi.org/10.48550/arXiv.2512.16649},
  doi          = {10.48550/ARXIV.2512.16649},
  eprinttype   = {arXiv},
  eprint       = {2512.16649},
  bibsource    = {dblp computer science bibliography, https://dblp.org}
}

@misc{openai2024gpt4omini,
  author = {{OpenAI}},
  title  = {{GPT-4o mini}: Advancing Cost-Efficient Intelligence},
  year   = {2024},
  month  = jul,
  url    = {https://openai.com/index/gpt-4o-mini-advancing-cost-efficient-intelligence/}
}
\bibliographystyle{icml2026}

\newpage
\appendix
\onecolumn
\section{Experimental Details}

\subsection{Algorithm}
Algorithm~\ref{alg:ask_to_distill} summarizes our method.

\begin{algorithm*}[t]
\caption{\method: Learning to Elicit Teacher Supervision}
\label{alg:ask_to_distill}
\small
\begin{algorithmic}[1]
\REQUIRE Student $\pi_S$, teacher $\pi_T$, question-generation instruction $x$;
candidate size $K$, question budget $L_q$, shared response budget $L_{\max}$;
calibration percentile $\kappa$ and question count $N_{\mathrm{cal}}$;
synthesizer steps $T_{\mathrm{syn}}$, groups per step $B$, group size $g$;
raw question count $N$, OPD epochs $E$, responses per question $n_{\mathrm{roll}}$.
\ENSURE Synthetic question bank $\mathcal{D}$ and distilled student $\pi_S$.

\STATE Initialize synthesizer $S_\theta \leftarrow \operatorname{Copy}(\pi_S)$.
\STATE Freeze $\pi_S$ and $\pi_T$ during synthesizer optimization.
\STATE Calibrate $\delta_{\mathrm{ref}}$ as the $\kappa$-th percentile of token discrepancies
on student rollouts for $N_{\mathrm{cal}}$ student-generated questions.

\STATE \textbf{Stage 1: Optimize the synthetic question distribution}
\FOR{$k=1,\ldots,T_{\mathrm{syn}}$}
    \STATE Sample $B$ groups of questions
    $\mathcal{Q}_b=\{q_{b,i}\}_{i=1}^{g}$ from $S_\theta(\cdot\mid x)$
    under budget $L_q$.
    \STATE Form the question batch $\mathcal{Q}$ by concatenating all groups.
    \STATE Cluster questions using BLEU-based pairwise distances
    to compute $\operatorname{share}(q)$ for each $q\in\mathcal{Q}$.
    \FORALL{$q\in\mathcal{Q}$}
        \STATE Sample $y\sim\pi_S(\cdot\mid q)$ under budget $L_{\max}$;
        let $L=|y|$.
        \STATE At each position $t$, evaluate
        $p_S=\pi_S(\cdot\mid q,y_{<t})$ and
        $p_T=\pi_T(\cdot\mid q,y_{<t})$.
        \STATE Set
        $A_t=\operatorname{TopK}(p_S)$,
        $B_t=\operatorname{TopK}(p_T)$, and
        $U_t=A_t\cup B_t$.
        \STATE Normalize $p_S$ and $p_T$ over $U_t$ to obtain
        $\bar p_S$ and $\bar p_T$.
        \STATE Compute
        $\delta_t=\operatorname{KL}(\bar p_T\|\bar p_S)$,
        $\omega_t=\sum_{v\in A_t}p_T(v)$, and
        $s_t=\min\!\left(\delta_t/(\delta_{\mathrm{ref}}+\epsilon),1\right)\omega_t$.
        \STATE Compute
        $P_{\mathrm{len}}(q)=1-\lambda_{\mathrm{len}}\min(L/L_{\max},1)$
        and
        $P_{\mathrm{div}}(q)=\max(0,1-\lambda_{\mathrm{div}}\operatorname{share}(q))$.
        \STATE Assign question reward
        $R(q)=P_{\mathrm{len}}(q)P_{\mathrm{div}}(q)L^{-\alpha}
        \sum_{t=1}^{L}s_t$.
    \ENDFOR
    \STATE Compute reward mean $\mu_b$ and standard deviation $\sigma_b$
    within each group $\mathcal{Q}_b$.
    \STATE Set
    $\widehat{A}_{b,i}=(R(q_{b,i})-\mu_b)/(\sigma_b+\epsilon)$.
    \STATE Update only $S_\theta$ with GRPO, applying
    $\widehat{A}_{b,i}$ to the generated tokens of question $q_{b,i}$.
    \STATE Save synthesizer checkpoints at the prescribed interval.
\ENDFOR

\STATE \textbf{Stage 2: Construct the synthetic question bank}
\STATE Select a synthesizer checkpoint $S_{\theta^\star}$.
\STATE Sample $N$ questions from $S_{\theta^\star}(\cdot\mid x)$
under budget $L_q$.
\STATE Apply format filtering and remove malformed, incomplete,
or answer-leaking outputs.
\STATE Apply lexical deduplication using token-set Jaccard similarity.
\STATE Apply semantic deduplication using the frozen Qwen3-Embedding-4B model.
\STATE Store the retained questions as the prompt-only bank $\mathcal{D}$.

\STATE \textbf{Stage 3: Distill the student on the synthetic bank}
\STATE Unfreeze $\pi_S$ and keep $\pi_T$ frozen.
\FOR{$e=1,\ldots,E$}
    \FORALL{minibatches $\mathcal{B}\subset\mathcal{D}$}
        \STATE Snapshot the current student as $\pi_{\mathrm{old}}$.
        \STATE For each $q\in\mathcal{B}$, sample $n_{\mathrm{roll}}$ responses
        from $\pi_{\mathrm{old}}(\cdot\mid q)$ under budget $L_{\max}$.
        \STATE At each response position $t$, evaluate
        $p_S=\pi_{\mathrm{old}}(\cdot\mid q,y_{<t})$ and
        $p_T=\pi_T(\cdot\mid q,y_{<t})$.
        \STATE Set $A_t=\operatorname{TopK}(p_S)$ and
        $w_t(v)=p_S(v)/\sum_{u\in A_t}p_S(u)$ for $v\in A_t$.
        \STATE Compute per-candidate advantages
        $r_t(v)=w_t(v)\log\!\left(p_T(v)/p_S(v)\right)$,
        $v\in A_t$.
        \STATE Update $\pi_S$ with the PPO-clipped objective using
        $r_t(v)$ directly as fixed advantages,
        summing over candidates and averaging over valid response tokens.
    \ENDFOR
\ENDFOR
\STATE \textbf{return} $\mathcal{D},\pi_S$.
\end{algorithmic}
\end{algorithm*}

\subsection{Implementation Details}
\label{sec:implementation}

Our framework was developed based on verl \cite{verl}. For reward computation, we use $K=16$ candidate tokens, a question-generation budget of $L_q=4096$ tokens, and a student-response budget of $L_{\max}=7168$ tokens, with the latter matched to the downstream OPD response budget by construction (\S\ref{sec:opd}). For each student--teacher pair, the discrepancy scale $\delta_{\mathrm{ref}}$ is set to the $\kappa=95$th percentile of $\delta_t$ estimated from $N_{\mathrm{cal}}=200$ questions, and we use $\epsilon=10^{-8}$ in Eq.~\eqref{eq:st}. For reward shaping, we set the aggregation exponent $\alpha=0.5$, the length-penalty coefficient $\lambda_{\mathrm{len}}=0.3$, the diversity-penalty coefficient $\lambda_{\mathrm{div}}=1.0$, and the clustering threshold $\tau=0.1$. The synthetic-data synthesizer is optimized with GRPO using 8 prompts and a group size of $g=8$, yielding 64 candidate questions per step. We use a learning rate of $1\times10^{-6}$, entropy and KL coefficients of 0.01 and 0.005, respectively, and a rollout temperature of 1.0, training for 60 steps and saving checkpoints every 5 steps. We generate $N=20{,}000$ raw questions in 10 shards of 2000 samples each and apply Jaccard and cosine deduplication thresholds of $\tau_{\mathrm{lex}}=0.7$ and $\tau_{\mathrm{sem}}=0.95$, respectively. Downstream OPD uses token-level rewards directly as advantages, with the student's top-$K$ candidates ($K=16$) weighted by $\bar p_S$. We sample $n_{\mathrm{roll}}=4$ responses per prompt at temperature 1.0 and train with a batch size of 64, a learning rate of $1\times10^{-6}$, and \texttt{fp32} parameters for 2 epochs, saving checkpoints every 5 steps. For each student--teacher pair and each OPD method, we select a single downstream student checkpoint based on AMC23 performance for mathematical reasoning and LiveCodeBench performance for code generation. All benchmark scores reported for that method are evaluated at this same checkpoint, rather than selected independently for each benchmark. AMC23 and LiveCodeBench therefore serve as checkpoint-selection benchmarks; their reported scores are not held-out estimates.

\subsection{Evaluation Details}
\label{sec:eval_details}

\paragraph{Metric.}
For each benchmark, we sample $k$ responses per problem and report avg@$k$: the fraction of correct responses for each problem, averaged over problems and scaled to $[0,100]$. With $k=1$, this reduces to single-sample accuracy. All reported results are means over three independent runs with different random seeds. The reported averages weight each benchmark equally, regardless of its number of problems. Table~\ref{tab:eval_settings} lists the per-benchmark settings. All evaluations use vLLM (v0.11.0) \cite{vllm} with bf16 inference on NVIDIA A100 80GB GPUs.

\paragraph{Mathematical reasoning.}
We use the DAPO prompt format, which asks the model to reason step by step and to give its final answer within \texttt{\textbackslash boxed\{\}}. Qwen3 students and teachers are evaluated in non-thinking mode, whereas DeepSeek-R1-Distill students and their teachers (JustRL and Skywork-OR1) use their default thinking templates. We extract the last \texttt{\textbackslash boxed\{\}} expression, falling back to the last ``Answer:'' line, and verify it against the reference answer by rule-based string normalization and symbolic equivalence checking with SymPy \cite{sympy}; no model-based judge is used.

\paragraph{Code generation.}
For LiveCodeBench, we use the v5--v6 release subset of 342 problems published between September 2024 and April 2025, with the original LiveCodeBench prompt. For APPS, we use the full introductory and competition test splits (1{,}000 problems each) and 1{,}000 interview problems drawn from the test split with a fixed random seed, keeping at most 50 test cases per problem; prompts follow the LiveCodeBench standard-input template. MBPP+ uses EvalPlus v0.2.0 (378 problems) with its standard prompt template and greedy decoding. Qwen3 models are evaluated in non-thinking mode. We extract the last fenced code block from each response and execute it against all test cases; a response is counted as correct only if it passes every test within the time limit. In the math-to-code transfer experiment (Figure~\ref{fig:math_to_code}), the Qwen3 models use the non-thinking settings in Table~\ref{tab:eval_settings}, whereas DeepSeek-R1-Distill students and their mathematics teachers are evaluated in thinking mode with their default chat templates, using temperature 0.6, top-$p$ 0.95, top-$k$ 20, and a maximum generation length of 32{,}768 tokens on all three benchmarks.

\begin{table}[h]
  \centering
  \caption{Evaluation settings. $k$ is the number of sampled responses per problem; ``Max tokens'' is the maximum generation length.}
  \label{tab:eval_settings}
  \small
  \begin{tabular}{@{}l r r c c c r@{}}
    \toprule
    Benchmark & \#Problems & $k$ & Temperature & Top-$p$ & Top-$k$ & Max tokens \\
    \midrule
    AIME24 / AIME25 / AIME26 & 30 each & 16 & 0.7 & 0.95 & -- & 31{,}744 \\
    AMC23 & 40 & 16 & 0.7 & 0.95 & -- & 31{,}744 \\
    BRUMO25 & 30 & 16 & 0.7 & 0.95 & -- & 31{,}744 \\
    MATH500 & 500 & 4 & 0.7 & 0.95 & -- & 31{,}744 \\
    \midrule
    LiveCodeBench (v5--v6) & 342 & 4 & 0.7 & 0.8 & 20 & 8{,}192 \\
    APPS-Intro & 1{,}000 & 1 & 0.2 & 0.95 & 20 & 4{,}096 \\
    APPS-Interview & 1{,}000 & 1 & 0.2 & 0.95 & 20 & 4{,}096 \\
    APPS-Competition & 1{,}000 & 1 & 0.2 & 0.95 & 20 & 4{,}096 \\
    MBPP+ & 378 & 1 & \multicolumn{3}{c}{greedy} & 1{,}024 \\
    CodeContests & 165 & 1 & 0.2 & 0.95 & 20 & 4096 \\
    \bottomrule
  \end{tabular}
\end{table}

\subsection{Discussion}

\paragraph{Combining with filtering and OPD variants}
\label{sec:orthogonality}

\method learns a question distribution for a target student--teacher pair, while filtering determines which sampled questions are retained and the downstream OPD procedure determines how teacher supervision updates the student. These operations act at different stages of the pipeline. Consequently, learned question generation can be followed by additional filtering or selection rules, and the resulting synthetic bank can be used by alternative distillation procedures that accept prompt-only datasets. Our format filtering and deduplication already illustrate this separation: they curate the synthesizer's outputs before those questions are used for OPD.

The learning-value reward nevertheless connects prompt optimization to the downstream distillation procedure. Our support-aware score is designed around teacher supervision on the student's candidate support. A procedure using a different divergence, candidate support, or supervision allocation rule may benefit from a corresponding adaptation of the generation reward. We therefore view \method as a modular approach to constructing distillation prompts, with the effectiveness of any particular combination depending on how well the generation objective matches the downstream learning signal. Exploring such combinations is a natural direction for future work.

\begin{wrapfigure}{l}{0.48\textwidth}
    \centering
    \includegraphics[width=\linewidth]{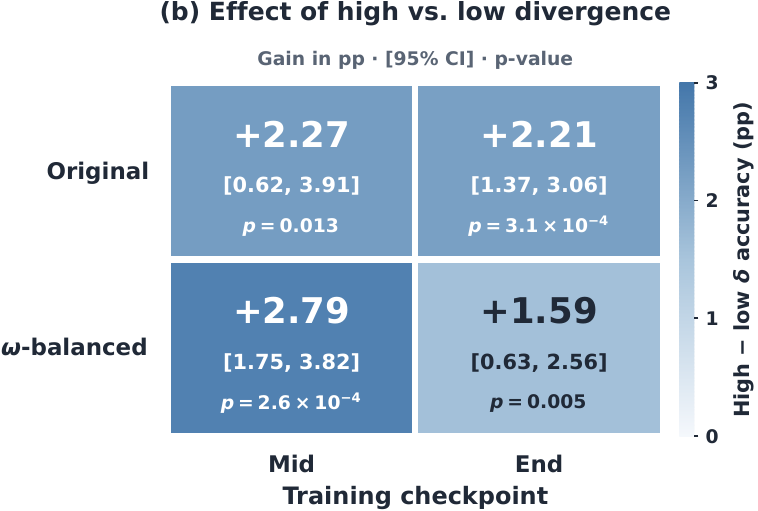}
    \caption{
    Accuracy gains of high- over low-divergence training groups.
    Each cell reports the gain in percentage points,
    its 95\% confidence interval, and $p$-value.
    }
    \label{fig:divergence_controlled_comparison}
\end{wrapfigure}

\subsection{Experimental Details and Additional Signal Validation}
\label{app:signal_validation}

\paragraph{Data and splits.}
All questions in these analyses are natural questions from DAPO-17k-Math; no synthesizer outputs are used. Among 17,177 eligible questions, we exclude a 128-question probe set and 500 questions reserved for token-level analysis from downstream training, leaving a pool of 16,549 questions. The one-step analysis uses 1,500 update questions disjoint from the probe set: the 500 token-level questions and 1,000 additional questions from the training pool. The one-step and downstream training analyses are conducted independently.

\paragraph{One-step learning value.}
For each update question, we apply a single plain-SGD OPD step with a learning rate of $3 \times 10^{-5}$ using four responses sampled from the base student. Each question is evaluated starting from the same base-student parameters. Its learning value is the resulting decrease in full-vocabulary $\mathrm{KL}(\pi_T \| \pi_S)$ on the fixed probe set, averaged over tokens of one fixed base-student response per probe question. This measurement differs from the OPD update in its questions, prefixes, and KL direction: the update uses a reverse-KL-style objective on the student's top-16 candidate tokens, whereas the probe evaluates full-vocabulary teacher-to-student KL on separate questions and fixed prefixes.

\paragraph{Test--retest reliability.}
We estimate test--retest reliability on 200 of the update questions by repeating the analysis with an independent draw of four student responses per question. For each pair of quantities being correlated, we compute a reliability-corrected correlation by dividing the observed Spearman correlation by the square root of the product of their two test--retest reliabilities. We report this correction as a supplementary analysis of sensitivity to rollout-sampling noise.

\paragraph{Score-ranked training banks.}
We partition the training pool into five approximately equal-sized buckets by mean support-aware score $\bar{s}$, yielding approximately 3,310 questions per bucket. A random bank of the same size serves as a reference. Each bank is evaluated using three training seeds.

\paragraph{Factorial designs and divergence matching.}
For the $2 \times 2$ designs, we select the bottom and top tertiles of mean divergence $\bar{\delta}$, corresponding to $\bar{\delta} \leq 0.290$ and $\bar{\delta} \geq 0.404$, respectively. Within each tertile, we split questions at the median of mean compatibility $\bar{\omega}$, producing four cells of approximately 2,758 questions each. Because $\bar{\delta}$ and $\bar{\omega}$ are negatively correlated, this split can leave differences in divergence between the compatibility cells. The $\delta$-matched variant therefore divides each divergence tertile into 20 strata by $\bar{\delta}$ and splits $\bar{\omega}$ at its median within each stratum. This procedure approximately balances mean divergence between the two compatibility cells within each divergence level: 0.528 versus 0.531 in the high-divergence tertile and 0.229 versus 0.230 in the low-divergence tertile. Each resulting cell contains approximately 2,758 questions.

\paragraph{Training budgets and checkpoints.}
All banks use a common target budget of 24.37M training tokens. Runs reach this budget within 26--36 steps, including 30--31 steps for the five score buckets and the random reference bank, and consume at most 2,304 questions. No question is used more than once within a run. Seeds differ in question order and therefore in the subset consumed before reaching the budget. We evaluate the end checkpoint at the first step reaching the full budget. For the mid checkpoint, we use the first saved checkpoint reaching half the budget, with checkpoints saved every five steps. Consequently, mid checkpoints span 12.4M--18.2M tokens across banks and provide less tightly budget-controlled comparisons than end checkpoints.

\paragraph{Statistical analysis.}
Confidence intervals for the one-step analysis are obtained by bootstrapping over the 1,500 update questions. For downstream training, benchmark scores are first averaged within each run, and the training run is the unit of inference, with three seeds per bank. Pairwise contrasts use run-level two-sample $t$-tests, and effects in the $2 \times 2$ designs are evaluated using a two-way ANOVA over runs. The Jonckheere--Terpstra test assesses the ordered trend across the five score buckets using permutations over their 15 training runs. Figure~9 reports means and standard deviations across seeds.

\paragraph{Additional controlled comparisons.}
\label{app:divergence_controls}

To examine whether the training advantage associated with higher divergence persists after accounting for compatibility, we compare high- and low-divergence question banks under both the original design and a design balanced on the compatibility factor $\omega$. Figure~\ref{fig:divergence_controlled_comparison} reports accuracy differences in percentage points at intermediate and final training checkpoints, together with 95\% confidence intervals and $p$-values.

At the final checkpoint, high-divergence banks outperform low-divergence banks by $2.21$ points in the original design (95\% CI $[1.37,3.06]$, $p=3.1\times10^{-4}$). After balancing $\omega$, the advantage remains positive at $1.59$ points (95\% CI $[0.63,2.56]$, $p=0.005$). The intermediate checkpoint shows the same direction: gains of $2.27$ points in the original design (95\% CI $[0.62,3.91]$, $p=0.013$) and $2.79$ points after balancing $\omega$ (95\% CI $[1.75,3.82]$, $p=2.6\times10^{-4}$). All four confidence intervals exclude zero, supporting a positive training advantage for high-divergence banks that persists under compatibility balancing.

\section{Theoretical Properties of the Elicited-Signal Objective}
\label{app:theory}

We establish two properties of our framework: high normalized
reward certifies positions with simultaneous discrepancy and
compatibility, and compatibility enters the implemented OPD
advantages through an explicit log-ratio term. These results
characterize the elicited supervision and its connection to
the update, without assuming convergence of synthesizer
optimization.

\paragraph{Notation.}
Fix the scoring student $\pi_S$ and teacher $\pi_T$.
For a question $q$, let $y\sim\pi_S(\cdot\mid q)$ be a
nonempty response of length $1\le L\le L_{\max}$.
At position $t$, let $p_{S,t}$ and $p_{T,t}$ denote the
next-token distributions on the student-generated prefix,
and define
\[
  A_t=\Topk(p_{S,t}),\qquad
  B_t=\Topk(p_{T,t}),\qquad
  U_t=A_t\cup B_t.
\]
For a nonempty token set $C$, write
$p_{M,t}(C)=\sum_{v\in C}p_{M,t}(v)$ and let
$\bar p_{M,t}^{C}$ denote the distribution conditioned
on $C$, extended by zero outside $C$.
We assume strictly positive next-token probabilities.
All logarithms are natural.

Following Eqs.~\eqref{eq:div}--\eqref{eq:st}, write
\[
  \delta_t
  =
  \mathrm{KL}\!\left(
    \bar p_{T,t}^{U_t}
    \,\middle\|\,
    \bar p_{S,t}^{U_t}
  \right),
  \qquad
  \omega_t=p_{T,t}(A_t),
\]
and
\[
  s_t=\min\{\delta_t/\Delta,1\}\omega_t,
  \qquad
  \Delta=\delta_{\mathrm{ref}}+\epsilon>0.
\]
Thus $s_t\in[0,1]$. With
$\bar s=L^{-1}\sum_{t=1}^{L}s_t$, the reward in
Eq.~\eqref{eq:reward} is
\[
  R=P_{\mathrm{len}}P_{\mathrm{div}}
    L^{1-\alpha}\bar s,
  \qquad
  0<\alpha<1,
\]
where both penalty factors lie in $[0,1]$.
We suppress the dependence of $R$ on the sampled response
and scoring batch, and write $[x]_+=\max\{x,0\}$.

\subsection{Compatible-disagreement coverage}
\label{app:theory-coverage}

For a threshold $h\in(0,1)$, define
\[
  \mathcal E_h
  =
  \{t:\delta_t\ge h\Delta,\ \omega_t\ge h\}.
\]
These positions simultaneously exhibit teacher--student
discrepancy and teacher probability mass within the
student's candidate set.

\begin{proposition}[Compatible-disagreement coverage]
\label{prop:coverage}
For every nonempty response,
\[
  \frac{|\mathcal E_h|}{L}
  \ge
  \left[\frac{\bar s-h}{1-h}\right]_+
  \ge
  \left[
    \frac{R/L^{1-\alpha}-h}{1-h}
  \right]_+.
\]
\end{proposition}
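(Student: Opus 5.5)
The plan is to bound each token score $s_t$ from above according to whether $t$ lies in $\mathcal E_h$, and then average over positions. The key pointwise claim is
\[
  s_t \le 1 \quad\text{for } t\in\mathcal E_h,
  \qquad
  s_t < h \quad\text{for } t\notin\mathcal E_h .
\]
The first bound holds because $\min\{\delta_t/\Delta,1\}\le 1$ and $\omega_t=p_{T,t}(A_t)\le 1$.

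For the second bound, a position outside $\mathcal E_h$ has either $\delta_t<h\Delta$ or $\omega_t<h$.
\begin{itemize}
\item If $\delta_t<h\Delta$, then $\min\{\delta_t/\Delta,1\}<h$, and since $\omega_t\le 1$ we get $s_t<h$.
\item If $\omega_t<h$, then since the min factor is at most $1$ we again get $s_t<h$.
\end{itemize}
The only place needing care is this case split, which uses the factorized form of $s_t$ and the fact that both factors lie in $[0,1]$. I do not expect a genuine obstacle here; the one point to verify is that $\Delta>0$, so that the condition $\delta_t<h\Delta$ translates exactly into the min factor being below $h$.

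Next, sum over positions. Write $m=|\mathcal E_h|$. Then
\[
  L\bar s=\sum_t s_t\le m\cdot 1+(L-m)h ,
\]
so $\bar s\le h+(1-h)\,m/L$. Rearranging with $1-h>0$ gives
\[
  \frac{m}{L}\ge\frac{\bar s-h}{1-h}.
\]
Since $m/L\ge 0$ trivially, we may take the positive part. This is the first inequality.

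For the second inequality, use $R=P_{\mathrm{len}}P_{\mathrm{div}}L^{1-\alpha}\bar s$ with both penalty factors in $[0,1]$. This gives $R\le L^{1-\alpha}\bar s$, i.e. $R/L^{1-\alpha}\le\bar s$. The map $x\mapsto[(x-h)/(1-h)]_+$ is nondecreasing, so substituting the smaller quantity $R/L^{1-\alpha}$ for $\bar s$ yields the second inequality.
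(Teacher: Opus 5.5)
Your proposal is correct and follows the paper's proof step for step. You bound $s_t$ by $h$ off $\mathcal E_h$ and by $1$ on it, average to get $\bar s\le h+(1-h)|\mathcal E_h|/L$, rearrange using $|\mathcal E_h|/L\ge 0$, and finish with $R/L^{1-\alpha}=P_{\mathrm{len}}P_{\mathrm{div}}\bar s\le\bar s$ together with monotonicity of the positive part; your strict bound $s_t<h$ is valid, though the paper's non-strict $s_t\le h$ suffices.
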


\begin{proof}
If $t\notin\mathcal E_h$, then either
$\delta_t<h\Delta$ or $\omega_t<h$, and hence $s_t\le h$.
Since $s_t\le1$ at every position,
\[
  \bar s
  \le
  h+(1-h)\frac{|\mathcal E_h|}{L}.
\]
Rearranging, using nonnegativity, and noting that
\[
  R/L^{1-\alpha}
  =
  P_{\mathrm{len}}P_{\mathrm{div}}\bar s
  \le\bar s
\]
gives the result.
\end{proof}

Thus, a normalized reward above $h$ certifies a nonzero
fraction of positions exhibiting both discrepancy and
compatibility. Normalization by $L^{1-\alpha}$ is
necessary because the unnormalized reward also depends
on response length.

\paragraph{Expected coverage.}
Sample a scoring batch, select a question uniformly,
sample its student response, and select a response
position uniformly. Then
\[
  \Pr(t\in\mathcal E_h)
  \ge
  \left[
    \frac{\mathbb E[R/L^{1-\alpha}]-h}{1-h}
  \right]_+.
\]
This follows by taking expectations of the pointwise
inequality in the proof above and using nonnegativity.
The expectation includes question generation, student
responses, and the batch-dependent diversity factor.

\subsection{Compatibility and OPD advantages}
\label{app:theory-compat}

We next relate compatibility to the candidate advantages
used in \S\ref{sec:opd}. Unlike the coverage certificate
computed with the frozen scoring student, the following
identity holds for any rollout policy during OPD.

\begin{proposition}[Compatibility in the OPD advantage]
\label{prop:advantage}
Fix a response position and let
$p_S=p_{S,\mathrm{old}}$ be the rollout-policy
distribution. Write
\[
  A=\Topk(p_S),\qquad
  a=p_S(A),\qquad
  \omega=p_T(A).
\]
For the candidate advantages in
Eq.~\eqref{eq:opdreward},
\[
  r(v)=\bar p_S^A(v)\log\frac{p_T(v)}{p_S(v)},
  \qquad v\in A,
\]
we have
\[
  \sum_{v\in A}r(v)
  =
  -\mathrm{KL}\!\left(
    \bar p_S^A\,\middle\|\,\bar p_T^A
  \right)
  +\log\frac{\omega}{a}.
\]
If the position is scored using these same student and
teacher distributions, then $s\ge h$ implies
$\log(\omega/a)\ge\log h$.
\end{proposition}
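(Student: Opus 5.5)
The identity follows from rewriting each log-ratio in terms of the conditional distributions on $A$. For $v\in A$ we have $p_S(v)=a\,\bar p_S^A(v)$ and $p_T(v)=\omega\,\bar p_T^A(v)$; both $a$ and $\omega$ are positive because all next-token probabilities are strictly positive. Hence
\[
  \log\frac{p_T(v)}{p_S(v)}
  =
  \log\frac{\bar p_T^A(v)}{\bar p_S^A(v)}+\log\frac{\omega}{a}.
\]
Multiply by $\bar p_S^A(v)$ and sum over $v\in A$. Since $\sum_{v\in A}\bar p_S^A(v)=1$, the constant term contributes exactly $\log(\omega/a)$. The remaining sum $\sum_{v\in A}\bar p_S^A(v)\log(\bar p_T^A(v)/\bar p_S^A(v))$ is $-\mathrm{KL}(\bar p_S^A\|\bar p_T^A)$ by definition. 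This gives the stated decomposition. The calculation is routine; the only care needed is that both conditional distributions live on the same set $A$, so the KL is finite.

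For the implication, assume the scoring student and teacher distributions at this position coincide with $p_S$ and $p_T$. Then the same $A$ appears in both $\omega_t=p_T(A_t)$ and the advantage, so $s=\min\{\delta/\Delta,1\}\,\omega\le\omega$, because the first factor lies in $[0,1]$. Therefore $s\ge h$ gives $\omega\ge h$. Also $a=p_S(A)\le 1$, so $\log(\omega/a)\ge\log\omega\ge\log h$ by monotonicity of $\log$, using $h>0$.

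The main point to check is that the top-$K$ set is the same object in both places: $A_t$ in the score must be computed from the same student distribution as the advantage's $A$. That is exactly the hypothesis ``scored using these same student and teacher distributions.'' Without it, the compatibility of the frozen scoring student would not transfer to the rollout policy.
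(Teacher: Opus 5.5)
Your proof is correct and follows the paper's argument essentially line for line: you use the same rewriting $\log(p_T(v)/p_S(v)) = \log(\bar p_T^A(v)/\bar p_S^A(v)) + \log(\omega/a)$, weight by $\bar p_S^A$ and sum over $A$, and then use $\omega \ge s \ge h$ together with $a \le 1$ for the final claim. Your extra remarks, that $a,\omega>0$ and that the top-$K$ set must be the same in the score and in the advantage, make explicit what the paper leaves implicit.
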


\begin{proof}
For each $v\in A$,
\[
  \log\frac{p_T(v)}{p_S(v)}
  =
  \log\frac{\bar p_T^A(v)}{\bar p_S^A(v)}
  +\log\frac{\omega}{a}.
\]
Multiplying by $\bar p_S^A(v)$ and summing over $A$
gives the identity. The final claim follows from
$\omega\ge s\ge h$ and $a\le1$.
\end{proof}

Compatibility therefore contributes a common additive
offset to the candidate log-ratios, while the conditional
distributions determine their relative values.
When $\omega<a$, this offset is negative; at
high-scoring positions, it is bounded below by $\log h$.
The identity characterizes the fixed advantages,
rather than the resulting parameter-gradient norm.

\paragraph{Teacher truncation.}
Compatibility also has a distributional interpretation:
\[
  \mathrm{TV}(p_T,\bar p_T^A)=1-\omega,
\]
where $\mathrm{TV}(P,Q)=\frac12\sum_v|P(v)-Q(v)|$.
Indeed, the total absolute difference inside $A$ is
$1-\omega$, as is the teacher mass outside $A$.
Consequently, $s_t\ge h$ implies
\[
  \mathrm{TV}(p_{T,t},\bar p_{T,t}^{A_t})
  \le1-h.
\]
Here $\bar p_T^A$ is an analysis device: the implemented
update uses full-vocabulary teacher log-probabilities.
Moreover, a fixed top-$K$ set need not contain most of
the student's probability mass; this must be assessed
separately through $p_S(A)$.

\paragraph{Scope.}
\label{app:theory-scope}
These results characterize the scoring signal and its
relationship to the fixed OPD advantages. They do not
lower-bound parameter gradients or downstream
improvement: the scoring discrepancy is computed on
$U_t$, whereas explicit candidate-level update terms
are restricted to $A_t$, and the scoring and update
divergences have different directions. Coverage
certificates computed with the frozen scoring student
also need not persist as the student evolves during
OPD. Compatibility concerns candidate probability mass,
rather than question correctness or membership in
either model's training distribution. The effectiveness
of the elicited supervision for student learning is
evaluated empirically in \S\ref{sec:experiments}.

\clearpage
\section{Prompt Templates}
\label{app:prompts}

We provide the synthesizer and student-solving templates below. Both mathematical
and code question synthesis use fixed instructions without persona conditioning
or few-shot example problems. In the templates, \texttt{\{q\}} denotes the question
to be solved. Line wrapping is adjusted for readability.

\begin{promptcard}{PromptNavy}{SYNTHESIZER / MATH}
\promptrole{PromptNavy}{SYSTEM}
FIRST, in your private scratch-pad, think step-by-step to design a brand-new, non-trivial problem. The problem could come from any field of mathematics, including but not limited to algebra, geometry, number theory, combinatorics, prealgebra, probability, statistics, and calculus. Aim for a difficulty such that fewer than 30\% of advanced high-school students could solve it. Avoid re-using textbook cliches or famous contest problems.

The problem must be self-contained, have a well-defined answer, and require several non-obvious reasoning steps that build on each other, not a single formula lookup.

THEN, without revealing any of your private thoughts, output ONLY the problem statement. Do not include a solution, an answer, or any commentary.\tcblower
\promptrole{PromptNavy}{USER}
Generate one new, challenging mathematical reasoning problem now.
\end{promptcard}

\begin{promptcard}{PromptNavy}{SYNTHESIZER / CODE}
\promptrole{PromptNavy}{SYSTEM}
FIRST, in your private scratch-pad, think step-by-step to design a brand-new, non-trivial competitive programming problem on your own subject. Aim for a difficulty such that a strong contestant would need several non-obvious algorithmic insights, not a single library call. Avoid re-using classic textbook tasks or well-known contest problems.

The problem must be self-contained and must state its Input format, its Output format, and the constraints on the input. The program that solves it reads from standard input and writes to standard output.

THEN, without revealing any of your private thoughts, output ONLY the problem statement. Do not include a solution, sample code, an answer, or any commentary.\tcblower
\promptrole{PromptNavy}{USER}
Generate one new, challenging competitive programming problem now.
\end{promptcard}

\clearpage
\noindent{\large\bfseries Student-solving templates}\par
\smallskip
The following templates are user messages. Mathematical reasoning uses the
stage-specific formats below; the code template is shared by reward computation,
discrepancy calibration, code OPD banks, and LiveCodeBench evaluation.

\begin{promptcard}{PromptOrange}{MATH / Reward computation and discrepancy calibration}
\promptrole{PromptOrange}{USER}
\texttt{\{q\}}

Please reason step by step, and put your final answer within \texttt{\textbackslash{}boxed\{\}}.
\end{promptcard}

\begin{promptcard}{PromptOrange}{MATH / Downstream OPD on synthetic questions}
\promptrole{PromptOrange}{USER}
Solve the following math problem step by step. The last line of your response should be of the form Answer: \texttt{\$Answer} (without quotes) where \texttt{\$Answer} is the answer to the problem.

\texttt{\{q\}}
\end{promptcard}

\begin{promptcard}{PromptOrange}{MATH / Vanilla OPD on real questions}
\promptrole{PromptOrange}{USER}
Solve the following math problem step by step. The last line of your response should be of the form Answer: \texttt{\$Answer} (without quotes) where \texttt{\$Answer} is the answer to the problem.

\texttt{\{q\}}

Remember to put your answer on its own line after "Answer:".
\end{promptcard}

\begin{promptcard}{PromptOrange}{MATH / Evaluation suffix}
{\small\itshape Append to the existing evaluation prompt in the real-question format above.}\par\smallskip
Please reason step by step, and put your final answer within \texttt{\textbackslash{}boxed\{\}}.
\end{promptcard}

\begin{promptcard}{PromptOrange}{CODE / Student solving}
\promptrole{PromptOrange}{USER}
You will be given a question (problem specification) and will generate a correct Python program that matches the specification and passes all tests. You will NOT return anything except for the program.

Question: \texttt{\{q\}}

Read the inputs from stdin solve the problem and write the answer to stdout (do not directly test on the sample inputs). Enclose your code within delimiters as follows. Ensure that when the python program runs, it reads the inputs, runs the algorithm and writes output to STDOUT.
\par\smallskip{\ttfamily\small\noindent\textasciigrave\textasciigrave\textasciigrave python\\
\# YOUR CODE HERE\\
\textasciigrave\textasciigrave\textasciigrave\par}
\end{promptcard}

\end{document}